\newcommand{\noop}[1]{}
\pgfplotsset{% color change for embedding visualization (fuzzy clusters) 
	colormap={test}{[2pt]
    	rgb255=(117,112,179);
        rgb255=(117,112,179);
    },
}
\pgfplotsset{compat=1.14}
\definecolor{cSpec}{RGB}{231,41,138}
\definecolor{cSpecL}{RGB}{117,112,179}
\definecolor{cRSC}{RGB}{102,166,30}
\definecolor{cSC}{RGB}{27,158,219}
\definecolor{cDBSCAN}{RGB}{217,95,2}
\newtheorem{theorem}{Theorem}
\newtheorem*{maincontribution}{Main Contribution}
\newtheorem{auxcontribution}{Auxiliary Contribution}
\theoremstyle{definition}
\newtheorem{definition}{Definition}
\DeclareMathOperator*{\argmin}{arg\,min} 
\DeclareMathOperator*{\argmax}{arg\,max}
\title{Softmax-based Classification is k-means Clustering: Formal Proof, Consequences for Adversarial Attacks, and Improvement through Centroid Based Tailoring}
\author{Sibylle Hess \AND Wouter Duivesteijn \AND Decebal Mocanu}
\begin{document}

\maketitle

\begin{abstract}
We formally prove the connection between k-means clustering and the predictions of neural networks based on the softmax activation layer.  In existing work, this connection has been analyzed empirically, but it has never before been mathematically derived.  The softmax function partitions the transformed input space into cones, each of which encompasses a class.  This is equivalent to putting a number of centroids in this transformed space at equal distance from the origin, and k-means clustering the data points by proximity to these centroids.  Softmax only cares in which cone a data point falls, and not how far from the centroid it is within that cone.  We formally prove that networks with a small Lipschitz modulus (which corresponds to a low susceptibility to adversarial attacks) map data points closer to the cluster centroids, which results in a mapping to a k-means-friendly space.  To leverage this knowledge, we propose Centroid Based Tailoring as an alternative to the softmax function in the last layer of a neural network.  The resulting Gauss network has similar predictive accuracy as traditional networks, but is less susceptible to one-pixel attacks; while the main contribution of this paper is theoretical in nature, the Gauss network contributes empirical auxiliary benefits.
%One of the currently most puzzling effects in Deep Learning is how tiny alterations of input images can mislead the prediction. This effect is inter alia demonstrated by one-pixel-attacks, showing that for state-of-the-art architectures and a large number of pictures the alteration of a single pixel is sufficient to alter the prediction as well. In this paper, we explain this phenomenon by focusing on the space to which the penultimate layer transforms the input data points in standard neural network architectures. We show that the computed classifications in the last layer, which are approximated via the softmax function, are equivalent to an assignment of classes as known from the cluster assignment step in k-means. Arguing that a stable computation of similarities which determine the cluster assignments is decisive to obtain robust neural networks, we discuss and evaluate how the clustering perspective can help to return sensible transformations onto the penultimate layer.
\end{abstract}

%=========================
% Introduction
%=========================
\section{Introduction}

The likely cause for the resurgence of neural networks in the deep learning era lies in what people term their ``unreasonable effectiveness'' \cite{2017Sun,2018Fabbri,2018Zhang,2019Flagel}: they work well, so we use them.  This is an understandable reflex. In the enthusiasm with which the scientific community has embraced deep learning, building an understanding of \emph{how} they work is trailing behind.  While papers do exist that provide a deeper understanding of how a classifier arrives at its predictions \cite{2014Henelius,2014Duivesteijn,2016Ribeiro,2018Ribeiro}, such information is typically of a local, anecdotal, or empirical form.  Valuable as that information can be, there is a dearth of fundamental mathematical understanding of what goes on in deep learning.

The last layer of state-of-the-art neural networks computes the final classifications by approximation through the softmax function \cite{1868Boltzmann}.  This function partitions the transformed input space into cones, each of which encompasses a single class.  Conceptually, this is equivalent to putting a number of centroids at equal distance from the origin in this transformed space, and clustering the data points in the dataset by proximity to these centroids through $k$-means.  Several recent papers posed that exploring a relation between softmax and $k$-means can be beneficial \cite{2018Kilinc,2018Peng,schilling2018deep}.  Kilinc and Uysal write in \cite[Section 3.3]{2018Kilinc}:
\begin{quotation}
``One might suggest performing $k$-means clustering on the representation observed in the augmented
softmax layer ($Z$ or \texttt{softmax}$(Z)$) rather than $F$. Properties and respective clustering performances
of these representation spaces are empirically demonstrated in the following sections.''
\end{quotation}
As this quote demonstrates, the current state of scientific knowledge on the relation between $k$-means and softmax is empirical, which leads into our main contribution:
\begin{maincontribution}
We mathematically prove the relation between the transformation performed by softmax, and $k$-means clustering (cf. Theorem \ref{thm:pen=km} and Equation (\ref{eq:kmPred})).
\end{maincontribution}
We show that predictions of neural networks based on the softmax activation function are equivalent to assigning transformed data points to the closest centroid, as known from $k$-means. The centroids are here calculated from the weights of the last layer and the transformation of the data points is given by the output of the penultimate layer. 

\subsection{Auxiliary Contributions}

One can use this relation to explain why softmax-based neural networks are sensitive to one-pixel attacks in image classification \cite{su2019}.  
%does not really care whether a record is close to a particular centroid or far removed; as long as it occupies the same cone as the centroid in the transformed space, it will be assigned to this class.  
A picture needs to be classified into one of several conceptual classes, which neural networks nowadays often can do with a high confidence.  Softmax performs this task by evaluating, in the transformed space, in which cone the image falls.  Softmax does not care whether the image is close to the corresponding centroid or far removed: it is merely closest to the centroid in its cone, but the distance may be very large or very small.  A one-pixel attack consists of changing a carefully selected pixel in an input image, in such a way that the neural network assigns the wrong class with high confidence.  Images that are not vulnerable to one-pixel attacks, are likely to lie very close to the corresponding centroid in the transformed space; images that are vulnerable, are likely to lie further away.  The one-pixel attack may force the image across the border of its cone, moving them closer to another centroid.  The softmax cone classification is likely to assign too high a confidence to such images.  What we should do instead, is allow the final layer of a neural network to be less confident in such cases: a neural network should be allowed to have reasonable doubt, just as humans do.

Theoretical work on the effect of misclassification due to small perturbations of the input revolves around the Lipschitz continuity of the function of the network \cite{tsuzuku2018lipschitz}.  Generally, exact computation of the Lipschitz modulus is NP-hard even for two-layer networks, and state-of-the-art methods may significantly overestimate it \cite{2018Virmaux}.  
\begin{auxcontribution}
We theoretically show that in addition to a small Lipschitz modulus, the robustness of a neural net also depends on the proximity with which confidently classified points are mapped to their corresponding centroid (cf.\@ Theorem~\ref{thm:centroidClose}). This establishes a connection between the robustness of a network and its mapping to a $k$-means-friendly space.
\end{auxcontribution}
In reverse, the clustering suitability of the penultimate layer also has an influence on the Lipschitz modulus.
\begin{auxcontribution}
We propose Centroid Based Tailoring as an alternative to the softmax function in the last layer of a neural network, which:
\begin{itemize}
\item is theoretically well-founded through its relation with $k$-means clustering;
\item has competitive performance on records not vulnerable to one-pixel attacks;
\item is able to express reasonable doubt whenever confronted with data points vulnerable to one-pixel attacks.
\end{itemize}
\end{auxcontribution}
\begin{auxcontribution}
We propose an easily integrated proximal minimization update, such that the weights directly return the centroids, according to which the classification is performed. 
\end{auxcontribution}
Our %empirical 
experiments show that models which are trained this way consistently achieve a higher test-accuracy. 
The view of network classification as a $k$-means cluster assignment enables the definition of a new activation function based on the Gaussian kernel function. We propose postprocessing of trained neural networks, where the weights of hidden layers are trained such that the output of the penultimate layer is close to the centroids. We empirically show that this procedure reduces the number of images for which a successful one-pixel-attack can be found by a factor ranging from $2.7$ up to $6.8$.

%==========================================
% Related Work
%==========================================
\section{Related Work}

To the best of our knowledge, the literature is devoid of any paper deriving the relation between the softmax output layer and $k$-means clustering, which is what the paper you are currently reading provides.  The closest related papers are \cite{2018Peng,2018Kilinc}, which were discussed in the Introduction; their exploration of the relation between softmax and $k$-means does not go beyond the empirical level.

As softmax activation is practically a linear function, \cite{gal2016} shows in a different context that it extrapolates with unjustified high confidence data points which are very far from the training data. This makes it very sensitive to noise, which partially explains why the neural networks using softmax as a classifier are so easily affected by adversarial attacks.

Closer to our work, \cite{dou2018} proves that in a specific two-layer neural network ---a linear layer followed by a softmax output--- the ratio of the classification probability can be increased using the fast gradient sign method and  Carlini-Wagner’s $L_{2}$ attack. Recently, \cite{yang2018breaking} has formulated language modeling as a matrix factorization problem and identified the fact that the softmax layer with word embedding does not have enough capacity to model natural language. Furthermore, \cite{Sekitoshi2018} demonstrates that this bottleneck appears due to the fact that softmax uses an exponential function for non-linearity and proposes as an alternative a function composed by rectified linear units and sigmoids.

\subsection{Adversarial Attacks}
Further afield, there is a high amount of work on the topic of adversarial attacks in deep neural networks. Most of these publications focus on finding different methods to perform and prevent these attacks. The interested reader is referred to \cite{akhtar2018} and \cite{2019Carlini} for a more detailed survey on these attacks. To the best of our knowledge, paradoxically, not much work exists on the topic of understanding the fundamental properties of neural networks: why are they so sensitive and prone to misclassifying with high confidence input images which are perturbed by a tiny input noise?

The concept of adversarial attacks stems from a 2009 paper on Support Vector Machines \cite{2009Xu}; it has been extended to machine learning in general \cite{2013Biggio} and deep neural networks in particular \cite{szegedy2014}. The latter paper shows that a hardly perceptible perturbation on a testing input image can create the same reproducible classification error across two different models trained on two different subsets of the training data. In \cite{Goodfellow2015}, it is shown that the main problem which creates this error is given by the enforced linearity introduced in deep neural networks to ease the optimization problem, although doubt have been cast on this hypothesis \cite{2016Tanay}.

%\todo{Question: How many networks with a bit more complicated activation functions (which squash the input; e.g. tanh or sigmoid for hiddens, and sigmoids for output are actually affected by adversarial attacks?}

%==========================================
% Theory
%==========================================
\section{The Reason Why Softmax is Sensitive to Noise: Relation to $k$-means Clustering}
In this section we prove the relation between the transformation performed by softmax, and $k$-means clustering.  The core concept of the proof lies in Theorem \ref{thm:pen=km} and the subsequent Equation (\ref{eq:kmPred}), in Section \ref{sec:itsclustering}.  Before we can get there, however, we must dedicate a short Section \ref{sec:lipschitz} to Lipschitz continuity.  After the proof and its fallout, we discuss in Section \ref{sec:tailoring} how to adapt the neural network to reduce its overconfidence.

We begin with discussing feedforward networks, mapping points in the $n$-dimensional space to a $c$-dimensional probability vector, where $c$ is the number of classes.  
Let the function of the network have the following form: 
\[F(x)=\sigma\left(f_p(x)^\top W\right).\]
Here, $f_p(x)$ returns the output of the penultimate layer and $\sigma$ is the softmax function. The last layer is linear; its weights are represented by the matrix $W\in\mathbb{R}^{d\times c}$. We assume for ease of notation that the network function has no bias vector. Note that any affine function can be stated as a linear function by increasing the dimension of the input space by one.

\subsection{Lipschitz Continuity and Robustness}\label{sec:lipschitz}
The effect of misclassification due to small perturbations of the input is theoretically framed by the Lipschitz continuity of the function $F$ \cite{tsuzuku2018lipschitz}. A function $f:\mathbb{R}^n\rightarrow \mathbb{R}^c$ is Lipschitz continuous with modulus $L$, if for every $x_1,x_2 \in\mathbb{R}^n$ we have:
$$
    \left\|f(x_1)-f(x_2)\right\| \leq L\left\|x_1-x_2\right\|
$$
Since the Lipschitz modulus of the softmax function is smaller than one, the Lipschitz modulus of the function $F$ is given by $L_p\|W\|$, where $L_p$ is the Lipschitz modulus of the function $f_p$: 
$$
\begin{aligned}
    \left\|F(x_1)-F(x_2)\right\|^2 &\leq \left\|f_p(x_1)^\top W-f_p(x_2)^\top W\right\|\\
    &\leq L_p\left\|W\right\|\left\|x_1-x_2\right\|.
\end{aligned}
$$
If the Lipschitz modulus is small, then points which are close to each other have also close function values. With respect to neural networks, this denotes a robustness measure since the Lipschitz modulus bounds the effect on the classification of small distortions of data points~\cite{szegedy2014,wengExtremeValue,virmaux2018lipschitz}.
\subsection{Softmax and Inherent Clustering Properties}\label{sec:itsclustering}
We collect in the matrix $X\in\mathbb{R}^{n\times m}$ the $m$ training data points, such that point $x_j=X_{\cdot j}$. We notate with $f_p(X)$ the $d\times m$ matrix, having the vector $f_{p}(x_j)$ as column $j$.
\begin{theorem}\label{thm:pen=km}
Let the dimension of the penultimate layer $d$ be at least as large as the number of classes: $d\geq c-1$.
Given a network whose predictions are calculated as $y=\argmax_k f_{p}(x)^\top W_{\cdot k}$, there exist $c$ class centroids $Z_{\cdot k}\in\mathbb{R}^d$, equidistant to the origin, such that every point $x$ is assigned to the class whose center is closest in the transformed space:
\[y=\argmin_k\left\|f_{p}(x)-Z_{\cdot k}\right\|^2.\]
\end{theorem}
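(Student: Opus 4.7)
The plan is to reduce the claim to a small linear-algebraic existence statement. First I would rewrite the target classifier by expanding the squared distance: $\|f_p(x)-Z_{\cdot k}\|^2 = \|f_p(x)\|^2 - 2f_p(x)^\top Z_{\cdot k} + \|Z_{\cdot k}\|^2$. The first term does not depend on $k$, so if I can arrange $\|Z_{\cdot k}\|$ to be constant in $k$ (the equidistance-to-origin requirement), the $\argmin$ collapses to $\argmax_k f_p(x)^\top Z_{\cdot k}$. The remaining task is therefore to find $c$ vectors $Z_{\cdot k}\in\mathbb{R}^d$ of common norm whose inner-product ranking against every $f_p(x)$ agrees with that of the columns of $W$.

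Next I would exploit the fact that adding the same vector $v$ to every column leaves the ranking unchanged, since $f_p(x)^\top(W_{\cdot k}+v) = f_p(x)^\top W_{\cdot k} + f_p(x)^\top v$ and the trailing term is independent of $k$. So I would look for $Z_{\cdot k} = W_{\cdot k}+v$ with $\|W_{\cdot k}+v\|$ independent of $k$. Geometrically this says that $-v$ is equidistant from all $W_{\cdot k}$, i.e.\ the circumcenter of the simplex with vertices $W_{\cdot 1},\ldots,W_{\cdot c}$. Algebraically, expanding $\|W_{\cdot k}+v\|^2=\rho^2$ gives the linear system $W^\top v - \mu \mathbf{1} = -\tfrac12(\|W_{\cdot k}\|^2)_{k=1}^c$ in the $d+1$ unknowns $(v,\mu)$, where $\mu = (\rho^2-\|v\|^2)/2$ is an auxiliary scalar that can be read off a posteriori.

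The step that actually consumes the dimension hypothesis is showing this system is solvable whenever $d\ge c-1$. In general position the columns $W_{\cdot 1},\ldots,W_{\cdot c}$ are affinely independent, so their affine hull is a $(c-1)$-dimensional flat; this flat sits inside $\mathbb{R}^d$ precisely because $d\ge c-1$, and the circumcenter of the simplex they span lies inside that flat, yielding the required $v$. Setting $Z_{\cdot k}=W_{\cdot k}+v$ then simultaneously gives the equidistance and the equality of classifiers. The main obstacle I anticipate is the degenerate case of affinely dependent columns, in which no unique circumcenter exists and the linear system can become inconsistent (one sees this concretely by taking $W$ whose columns are collinear in $\mathbb{R}^d$, making the row of squared norms fail to lie in the range of $[W^\top\,{-}\mathbf{1}]$). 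Handling this cleanly — either by a small perturbation argument, or by observing that such a $W$ already corresponds to a classifier in which some classes are never strictly predicted, so the offending centroid can be placed freely on the sphere — is the one spot where care is needed.
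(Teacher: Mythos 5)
Your proof follows essentially the same route as the paper's: shift every column of $W$ by a common vector $v$ chosen so that the $Z_{\cdot k}=W_{\cdot k}+v$ have equal norms (a system of $c-1$ linear equations, asserted solvable because $d\geq c-1$), observe that the shift only adds the $k$-independent term $f_p(x)^\top v$ to every score, and expand the squared distance to convert the $\argmax$ into an $\argmin$. The degeneracy you flag --- affinely dependent columns of $W$, e.g.\ collinear $W_{\cdot k}$, for which the linear system can be inconsistent --- is a genuine issue, but the paper's own proof silently ignores it rather than resolving it, so your account is if anything slightly more careful than the published one.
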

\begin{proof}
Since $d\geq c-1$, a vector $v\in\mathbb{R}^d$ exists, such that the vectors $W_{\cdot k}+v$ for $k\in\{1,\cdots,c\}$ have the same norm. The vector $v$ is a solution of the following system of $c-1$ linear equations for $2\leq l\leq m$:
$$
\begin{aligned}
    &\|W_{\cdot 1}+v\|^2 = \|W_{\cdot l}+v\|^2, \quad\text{which is equivalent to:}\\& 2(W_{\cdot 1}-W_{\cdot l})^\top v = \|W_{\cdot 1}\|^2 - \|W_{\cdot l}\|^2,
\end{aligned}
$$
Let $Z_{\cdot k}=W_{\cdot k}+v$. We have:
\begin{align*}
    y &= \argmax_k f_p(x_j)^\top W_{\cdot k} \\
    &= \argmax_k f_p(x_j)^\top W_{\cdot k} +f_p(x_j)^\top v\\
    &= \argmax_k f_p(x_j)^\top Z_{\cdot k}\\
    &= \argmin_k \|f_p(x_j)\|^2- 2f_p(x_j)^\top Z_{\cdot k}+\|Z_{\cdot k}\|^2\\
    &= \argmin_k \|f_p(x_j)- Z_{\cdot k}\|^2.\tag*{\qedhere}
\end{align*}
\end{proof}
Theorem~\ref{thm:pen=km} implies that the one-hot encoded predictions of a neural network are computed as
\begin{align}\label{eq:kmPred}
\hat{Y}=\argmin_Y\|f_p(X)^\top-YZ^\top\|^2 \quad \text{s.t. }Y\in\mathbb{1}^{m\times c}.
\end{align}
The space $\mathbb{1}^{m\times c}$ consists of all binary partition matrices, that are all binary matrices $Y\in\{0,1\}^{m\times c}$ where every row contains exactly one one. In clustering this implies that every point $f_p(x_j)$ is assigned to exactly one cluster (cluster $k$ if $Y_{jk}=1$). For the neural network this models the notion that every point is assigned to exactly one class.

Equation~\eqref{eq:kmPred} is the matrix factorization form of the objective of $k$-means cluster assignments. As a result, class predictions are made according to a Voronoi tesselation of $\mathbb{R}^d$. Often, the activation function of the penultimate layer is the Rectified Linear Unit (ReLU), such that the matrix $f_p(X)$ is nonnegative and the Voronoi tesselation is performed on a nonnegative space. The result of Theorem~\ref{thm:pen=km} and Equation~\eqref{eq:kmPred} does yet hold independently of the employed activation functions, as long as the activation function is Lipschitz continuous. Since the class centers $Z_{\cdot k}$ have equal norms, each Voronoi cell has the shape of a convex cone. 

In general, it makes no difference for the classification accuracy how far the points $f_p(x)$ are to the centroid, as long as they are in the correct cone. The softmax confidence is high, for points which maximize the inner product $f_p(x)^\top Z_{\cdot k}$, where $Z_{\cdot k}$ is the class center of the predicted class, as the following calculation shows:
\begin{align*}
    \sigma(f_p(x)^\top W)_k &= \frac{\exp(f_p(x)^\top W_{\cdot k})}{\sum_{l=1}^c\exp(f_p(x)^\top W_{\cdot l})}\cdot \frac{\exp(f_p(x)^\top v)}{\exp(f_p(x)^\top v)}\\
    &=\frac{\exp(f_p(x)^\top Z_{\cdot k})}{\sum_{l=1}^c\exp(f_p(x)^\top Z_{\cdot l})}.
\end{align*}
As a result, the softmax confidence is high for points $f_p(x)$ which align with the direction of their class center $Z_{\cdot k}$ and have a large norm. However, 
the Lipschitz continuity of $f_p$ also yields the following inequality:
\begin{equation}\label{eq:LipschCentroid}
    \lVert f_p(x)-Z_{\cdot k}\rVert \leq L_p\lVert x-z_k \rVert,
\end{equation}
where $z_k\in\mathbb{R}^n$ such that $f_p(z_k)=Z_{\cdot k}$. We call in what follows the points $z$ which are mapped to a class centroid a prototype. Equation~\eqref{eq:LipschCentroid} demonstrates that the distance to the closest centroid in the penultimate layer is bounded. Points which lie nearby a prototype are mapped proportionally close to their class centroid.
In addition, with regard to robustness, mapping points far a way from the class center is not desirable, as the following theorem shows.
\begin{theorem}\label{thm:centroidClose}
Let $x\in\mathbb{R}^n$ be a data point with predicted class $k$ and let the center matrix $Z$ be computed as in the proof of Theorem~\ref{thm:pen=km}. We assume that $f_p$ is Lipschitz continuous with modulus $L_p$. Any distortion $\Delta\in\mathbb{R}^n$ which changes the prediction of point $\tilde{x}=x+\Delta$ to another class $l\neq k$ has a minimum size of
\begin{align*}
    \lVert \Delta \lVert \geq \frac{\lVert Z_{\cdot l}- Z_{\cdot k}\rVert - \lVert f_p(\tilde{x})-Z_{\cdot l}\rVert - \lVert f_p(x)-Z_{\cdot k}\rVert}{L_p}.
\end{align*}
\end{theorem}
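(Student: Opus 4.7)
The plan is to chain together the triangle inequality with the Lipschitz bound on $f_p$. The idea is that since $x$ is predicted as class $k$ and $\tilde{x}$ is predicted as class $l$, the penultimate-layer images $f_p(x)$ and $f_p(\tilde{x})$ live in the Voronoi cells of $Z_{\cdot k}$ and $Z_{\cdot l}$ respectively (by Theorem~\ref{thm:pen=km}). The gap $\|Z_{\cdot l} - Z_{\cdot k}\|$ between the two centroids therefore acts as a lower bound on how far $f_p$ must ``travel'' between the two images, up to the slack contributed by how far each of $f_p(x)$ and $f_p(\tilde{x})$ sits from its respective centroid.

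Concretely, I would first write the telescoping decomposition
\begin{equation*}
Z_{\cdot l} - Z_{\cdot k} = \bigl(Z_{\cdot l} - f_p(\tilde{x})\bigr) + \bigl(f_p(\tilde{x}) - f_p(x)\bigr) + \bigl(f_p(x) - Z_{\cdot k}\bigr),
\end{equation*}
and apply the triangle inequality to obtain
\begin{equation*}
\|Z_{\cdot l} - Z_{\cdot k}\| \;\leq\; \|Z_{\cdot l} - f_p(\tilde{x})\| + \|f_p(\tilde{x}) - f_p(x)\| + \|f_p(x) - Z_{\cdot k}\|.
\end{equation*}
Rearranging isolates $\|f_p(\tilde{x}) - f_p(x)\|$ on one side of the inequality.

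Next, I would invoke the Lipschitz continuity of $f_p$ with modulus $L_p$, which gives
\begin{equation*}
\|f_p(\tilde{x}) - f_p(x)\| \;\leq\; L_p \|\tilde{x} - x\| = L_p \|\Delta\|.
\end{equation*}
Combining the two inequalities and dividing by $L_p$ yields the claimed lower bound on $\|\Delta\|$.

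I do not expect a real obstacle here: the argument is essentially a triangle-inequality/Lipschitz sandwich, with the only subtle point being to ensure the decomposition is set up so that the two known-small terms ($\|f_p(\tilde{x})-Z_{\cdot l}\|$ and $\|f_p(x)-Z_{\cdot k}\|$) appear with the correct sign after rearranging. One should also mention that the bound is only informative when the numerator is positive, i.e.\@ when the two centroids are well separated relative to the distances of $f_p(x)$ and $f_p(\tilde{x})$ to their centroids; this is precisely the regime in which the auxiliary contribution (mapping points close to their centroids improves robustness) becomes meaningful.
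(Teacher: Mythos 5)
Your proof is correct and follows essentially the same route as the paper's: the paper applies the triangle inequality twice (once to bound $\lVert f_p(\tilde{x})-Z_{\cdot k}\rVert$ from above via the Lipschitz property, once to bound it from below via $\lVert Z_{\cdot l}-Z_{\cdot k}\rVert$), which is exactly your single three-term telescoping decomposition written in two steps. Your closing remark that the bound is only informative when the numerator is positive is a sensible observation the paper leaves implicit.
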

\begin{proof}
Let $x,\Delta$ and $Z$ be as described above. We derive from the triangle inequality and from the Lipschitz continuity the following inequality:
\begin{align}
    \lVert f_p(\tilde{x})-Z_{\cdot k}\rVert &\leq \lVert f_p(\tilde{x})-f_p(x)\rVert +\lVert f_p(x)-Z_{\cdot k}\rVert\nonumber\\
    &\leq L_p\lVert\Delta\rVert + \lVert f_p(x)-Z_{\cdot k}\rVert.\label{eq:triangleLipsch}
\end{align}
The triangle inequality also yields the following relationship:
\begin{equation*}
    \lVert Z_{\cdot l}-Z_{\cdot k}\rVert\leq \lVert f_p(\tilde{x})-Z_{\cdot l}\rVert + \lVert f_p(\tilde{x})-Z_{\cdot k}\rVert.
\end{equation*}
Subtracting $\lVert f_p(\tilde{x})-Z_{\cdot l}\rVert$ yields a lower bound on the distance $\lVert f_p(\tilde{x})-Z_{\cdot k}\rVert$, which we apply in Equation~\eqref{eq:triangleLipsch} to obtain the final bound on the distortion $\Delta$.
\end{proof}
Theorem~\ref{thm:centroidClose} provides three possible explanations for the phenomenon that too often (very) small distortions are sufficient to change the prediction of the neural network: 1) the Lipschitz modulus is large, 2) class centroids are close to each other, or 3) point $x$ or $\tilde{x}$ is not mapped close to their class centroid. The first case addresses the known measurement of robustness via the Lipschitz modulus. The second aspect, the distance between the centroids is maximized with the softmax confidence of the predicted class. Since the centroids are equidistant to the origin, the distance between the centroids is maximial if the centroids are orthogonal. Similarly, the softmax confidence of the predicted class achieves its maximum value $\sigma(f_p(x)^\top W)_k=1$ only if the point $f_p(x)=\alpha Z_{\cdot k}$ points in the same direction as the closest centroid ($\alpha>0$) and if the centroids are orthogonal. Hence, maximizing the softmax confidence of the predicted class goes hand in hand with maximizing the distance of the centroids.

Now, if the Lipschitz modulus is reasonably small and the centroids are far away from each other, then a small distortion resulting in a change of prediction entails that one of the points $x$ or $\tilde{x}$ are not close to its centroid. This motivates the definition of a new confidence measure, reflecting how far a point in the penultimate layer is from its centroid. 

\section{Gauss-Confidence and Centroid Based Optimization}\label{sec:tailoring}
The interpretation of the penultimate layer output as a $k$-means clustering space motivates the consideration of a new confidence function. A natural choice which reflects the proximity to the cluster centroids is the Gaussian kernel function. 
\begin{definition}
Given a function $f_p:\mathbb{R}^n\rightarrow \mathbb{R}^d$ and a centroid matrix $C\in\mathbb{R}^{d\times c}$, we define the \emph{Gauss-confidence} as the vector returned by the function $\kappa(x)$, where for $k\in\{1,\ldots,c\}$
\begin{equation}
    \kappa(x)_k = \exp(-\|f_p(x)- C_{\cdot k}\|^2)\in(0,1].
\end{equation}
\end{definition} 

Defining a confidence measurement based on the Gaussian kernel function is also known from so-called \emph{imposter networks}~\cite{lebedev2018impostor}. Imposter networks learn as many prototypes as there are training examples. That is instead of $c$ centroids, imposter networks consider $m$ centroids which determine the class by means of the average Gauss-confidence of imposters belonging to one class. In contrast to imposter networks and the softmax-classification, we do not employ any normalization. Hence, the proposed Gauss-score does not return a probability vector, which in return enables the reflection of outliers. That is, we say a point $x$ is close to a prototype of the predicted class if it is close to the centroid, resulting in a Gauss-confidence $\kappa(x)_k\approx 1$. An outlier is far away from all cluster centroids, which is reflected in a Gauss-confidence $\kappa(x)_l\approx 0$ for all $l\in\{1,\ldots,c\}$.
%---------------------------
%Gauss-Networks
%---------------------------
\subsection{Gauss-Networks}
Based on the results of Theorem~\ref{thm:centroidClose}, we aim for robust networks which map well-classifiable points close to the corresponding centroid and points which are difficult to classify further away from all centroids. 
To do so, the centroids should lie in the domain of the function $f_p$. Otherwise, all points in class $k$ will have a minimum distance of
\[\delta_k = \min_x\lVert f_p(x)-C_{\cdot k}\rVert\]
to their centroid.
For example, the equidistant centroids represented by the matrix $Z$ in Theorem~\ref{thm:pen=km} possibly attain negative values. Thus, if the function $f_p$ maps to the nonnegative space, which is often the case due to employed ReLu activation functions, penultimate layer outputs might never get close to their centroid.

If we apply the theory of $k$-means clustering, then the optimal centroids are given by the means of all points $f_p(x_j)$ belonging to one class. The $k$-means objective in Equation~\eqref{eq:kmPred} minimizes the whithin-cluster scatter and maximizes the intra-cluster distances. Hence, choosing the centroids according to the $k$-means objective
\[C = f_p(X) Y\left(Y^\top Y\right)^{-1}\]
provides a trade-off between the required proximity of points to their class centroid and the distance between the class centroids.

%Interesting is in this respect the relation to metric learning. The column $Z_{\cdot k}$ is equal to the centroid of all points which are assigned to class $k$. If $Z$ satisfies this optimality constraint, then the objective in Equation~\eqref{eq:kmPred} is equivalent to 
%\begin{equation}\label{eq:maxSimMinCut}
%\begin{aligned}
%    \hat{Y} &= \argmax_Y \sum_k\frac{Y_{\cdot k}^\top DD^\top Y_{\cdot k}}{\left|Y_{\cdot k}\right|}\\ 
%    &= \argmin_Y \sum_k\frac{Y_{\cdot k}^\top \left(\diag\left(DD^\top \mathbf{1}\right) - DD^\top\right) Y_{\cdot k}}{\left|Y_{\cdot k}\right|}.
%\end{aligned}
%\end{equation}
%The matrix $S=DD^\top$ has an interpretation as a similarity matrix: entry $S_{ji}$ denotes the similarity between points $i$ and $j$ in the space of the penultimate layer. Thus, the maximization term in Equation~\eqref{eq:maxSimMinCut} determines class predictions such that the similarity of points in the transformed space is maximized. The minimization term in Equation~\eqref{eq:maxSimMinCut} has a relationship to spectral clustering~\cite{von2007tutorial}. The matrix $\diag(S\mathbf{1})-S$ is known as the difference graph Laplacian. Interpreting the similarity matrix $S$ as the weighted adjacency matrix to a graph, the second term minimizes the rational cut of this graph. That is, if we cut all nodes belonging to one class from the graph, then we aim at cutting only edges with low weights.
%As a result, the centroid-based prediction of Equation~\eqref{eq:maxSimMinCut} incorporates the dichotomy of the within- and between- cluster scatter.

%----- algorithm pseudocode ----
\begin{algorithm}[t]
\caption{Centroid-based feed forward network}
\begin{algorithmic}[1]
  \Function{TailorNetwork}{$f_p, X,Y$} 
    \State $C\gets f_p(X) Y(Y^\top Y)^{-1}$ \Comment{Centroid update}
    %\State $\ell(f_p)=\|f_p(X)-\hat{Y}W^\top\|^2$
    \State $\min_{f_p}=\|f_p(X)^\top-YC^\top\|^2$
  \EndFunction
\end{algorithmic}
\label{alg:kmeansNetwork}
\end{algorithm}

%----------TABLE Real World NMI---------------
\begin{table*}[!thp]%[!hp]
    \caption{Results on Cifar-10 datasets for the two architectures VGG16~\cite{simonyan2014} and ResNet18~\cite{he2015}, three runs per configuration. For both the traditional network and the Gauss network, we report four columns: test set accuracy, average confidence in the predicted class, percentage of images misled by a one-pixel attack, and the average confidence in those misclassification.  Confidences reported in the left block are softmax-confidences; those in the right block are Gauss-confidences.}
    \label{tbl:attackCifar}
	\centering
	\begin{tabular}{ll|rlrl|rlrl}\toprule
	  &&\multicolumn{4}{c|}{Traditional network}&\multicolumn{4}{c}{Gauss network}\\
	        &&\multicolumn{2}{c}{Prediction}          &\multicolumn{2}{c|}{Attack} & \multicolumn{2}{c}{Prediction}  & \multicolumn{2}{c}{Attack}  \\
	Run &Network & accuracy & conf & rate  & conf & accuracy & conf & rate & conf  \\\midrule
    1 & VGG16        & 92.5\% & 0.97   & 52\% & 0.84 & 92.7\% & 0.70 & 13\%  & 0.59\\
    2 & VGG16        & 93.0\% & 0.97   & 44\% & 0.79 & 93.1\% & 0.78 & 16\%  & 0.70\\
    3 & VGG16        & 92.8\% & 0.97   & 52\% & 0.81 & 93.0\% & 0.81 & 18\%  & 0.68\\
    %1 & VGG16KM      & 93.5\% & 0.97   & 46\% & 0.79 & 93.5\% & 0.84 & 17\% & 0.65\\
    %2 & VGG16KM      & 93.7\% & 0.98   & 48\% & 0.76 & 93.9\% & 0.85 & 13\% & 0.50\\
    1 & ResNet18     & 94.4\% & 0.98   & 34\% & 0.77 & 94.2\% & 0.74 &  5\% & 0.61\\
    2 & ResNet18     & 94.8\% & 0.98   & 33\% & 0.76 & 94.8\% & 0.76 &  7\% & 0.49\\
    3 & ResNet18     & 95.0\% & 0.98   & 34\% & 0.79 & 94.8\% & 0.76 &  7\% & 0.63\\
    %1 & ResNet18KM   & 95.0\% & 0.98   & 27\% & 0.82 & 95.0\% & 0.86 &  7\% & 0.56\\
    %2 & ResNet18KM   & 94.8\% & 0.97   & 30\% & 0.75 & 94.8\% & 0.71 &  6\% & 0.31  \\\bottomrule
    \end{tabular}
\end{table*}

We roughly outline in Algorithm~\ref{alg:kmeansNetwork} our proposed post-processing step, returning a network which maximizes the Gauss-confidences of correct classifications. Given a network, whose output of the penultimate layer is represented by the mapping $f_p$, a training set $X\in\mathbb{R}^{n\times m}$ and the corresponding class labels $Y\in\mathbb{1}^{m\times c}$, the function \textsc{GaussNetwork} refines the network such that the training points are mapped close to their corresponding centroid. The centroids $C$ are chosen according to the $k$-means optimality criterion and a minimization subject to the network is performed. We do not outline here an implementation of the optimization step, as it is easily integrated into the standard optimization of neural networks by backpropagation. We refer to the networks returned by the function \textsc{GaussNetwork} as Gauss networks.
\section{Experimental Results}
We evaluate the proposed Gauss networks with respect to two of their core properties: robustness and a suitable reflection of classification confidences and prototypes and outliers by the Gauss-confidence. For this purpose we compare popular network architectures with the refined Gauss variant on the Cifar-10~\cite{krizhevsky2009} and MNIST~\cite{lecun1998} datasets. For all experiments we employ a PyTorch implementation based on the network architectures and optimization scheme provided by \url{https://github.com/kuangliu/pytorch-cifar}. We will publish our code upon acceptance of the paper.

We evaluate the robustness of models by means of two attack methods: the one pixel attack~\cite{su2019} and the Fast Gradient Sign Attack (FGSM)~\cite{goodfellow2014explaining}. One of the theoretical strengths of Gauss networks is the possibility to identify outliers. Since the softmax confidence of the predicted class is always larger than $\frac{1}{c}$, we also specify that a point is considered as an outlier if the Gauss confidence of the predicted class is smaller than $\frac{1}{c}$. Since we consider only datasets with ten classes, the outlier threshold is equal to $0.1$.
%-------------------------------
%Ciphar-10 Results
%-----------------------------
\subsection{Cifar-10 Results}
We investigate the vulnerability to one pixel attacks on 500 samples of the Cifar-10 test-dataset. One pixel attacks demonstrate a large discrepancy between the perception of the human brain and neural networks. There exists a plethora of example images, in which changing a single pixel leads to the neural network assigning the wrong class with high confidence. Here, we want to investigate whether this effect pertains when a notion of outliers according to the robustness influencing distance to centroids is possible (cf.\@ Theorem~\ref{thm:centroidClose}). 

Table \ref{tbl:attackCifar} lists results for three runs of every network configuration on the Cifar-10 dataset.  We experiment with two network architectures, VGG16~\cite{simonyan2014} and ResNet18~\cite{he2015}.  For every run we report two blocks of numbers; the leftmost block corresponds to statistics regarding the neural network as it is out-of-the-box, and the rightmost block corresponds to the same statistics regarding the corresponding Gauss network.  Hence, like-for-like comparison of numbers in the left and right block is the main competition in these experiments.

Within a block, we report four columns of statistics. The first two columns concern the prediction of the network on the test set: accuracy and average confidence are reported.  The last two columns concern the vulnerability of the network to one-pixel attacks: we report the percentage of images for which the most harmful one-pixel attack results in misclassification, and the average confidence of the network for these misclassifications.  All confidence reported in the last column per block of Table \ref{tbl:attackCifar} is misplaced; this number should be as close to zero as possible.

Notice that due to the network construction, confidences in the left and right block of the table are not the same: the left block contains confidence values as reported through softmax, and in the right block we report average Gauss-confidences.

When comparing the leftmost columns in each block, we see that the Gauss network essentially maintains the predictive accuracy of the original networks. Sometimes the accuracy is a little bit better, sometimes it is a little bit worse, but overall there seems to be little difference.  Comparison of the second columns shows that confidence of the Gauss network is markedly lower than that of the traditional network.  This is hardly surprising: we set out to reduce the network's vulnerability to one-pixel attacks by building in reasonable doubt, so one would expect the Gauss network to doubt itself more on all cases.  Since the reduced confidence does not come with a drop-off in predictive accuracy, we claim that this is a desirable result.  The third columns of both blocks reveal that we have accomplished our mission: the percentage of images vulnerable to one-pixel attacks has dropped dramatically (by a factor between roughly $2.7$ and $6.8$).  Comparison of the final columns show that the Gauss network has a substantially lower misplaced confidence in its misclassifications than the traditional network has, although this might be an artefact of the overall lower confidence.
\begin{figure}[t!]
\centering
\input{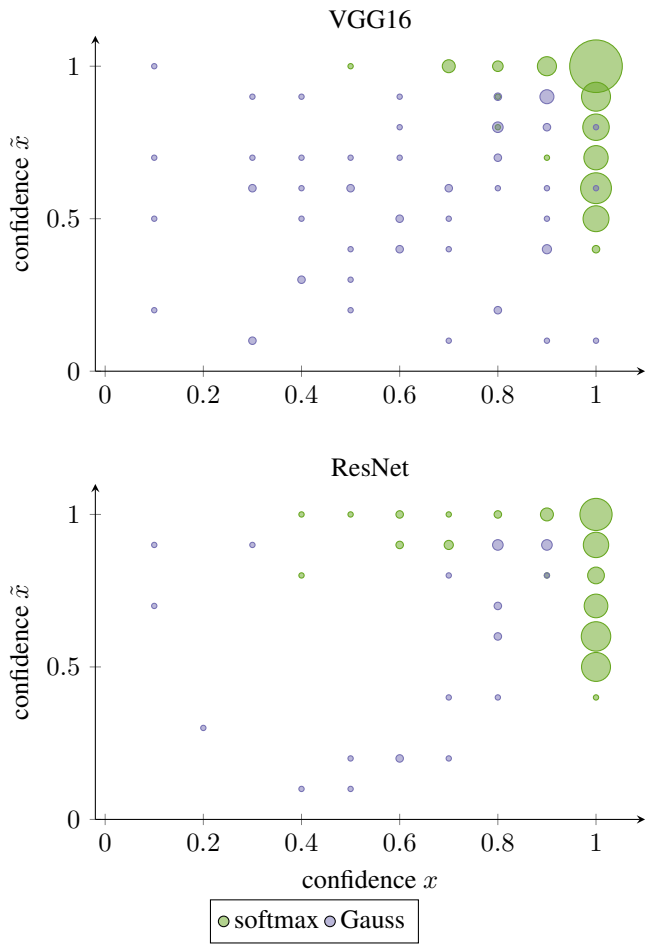}
\caption{Comparison of the confidence of the predicted class for points $x$ in the test set and the corresponding one pixel attack $\tilde{x}$ for the two architectures VGG16 and ResNet. The size of a point is equal to the square root of the number of points which have the corresponding confidence. Best viewed in color.}
\label{fig:statConf}
\end{figure}

This poses the question if there does also exist a low threshold for the softmax confidence, which decreases the attack rate if we accept only successful attacks with a confidence higher than the given threshold. 
We inspect the distribution of softmax and Gauss confidences of successfully performed one pixel attacks in Figure~\ref{fig:statConf}. We plot the confidence of test data point $x$ against the confidence of its one pixel alteration $\tilde{x}$, which results in a change of prediction. The size of each point corresponds to the square root of the number of occurrences of the denoted confidence pair. We observe that the Gauss-confidences are almost uniformly distributed over the whole space while the softmax confidence of the one pixel attacks $\tilde{x}$ concentrates in the interval $[0.5,1]$. Hence, the attack rate of softmax networks would only drop if we set a threshold higher than $0.5$. The low attack rates of the Gauss confidence are therewith not an artefact of the averagely lower Gauss confidence.

\begin{figure}[!t]
\centering
\input{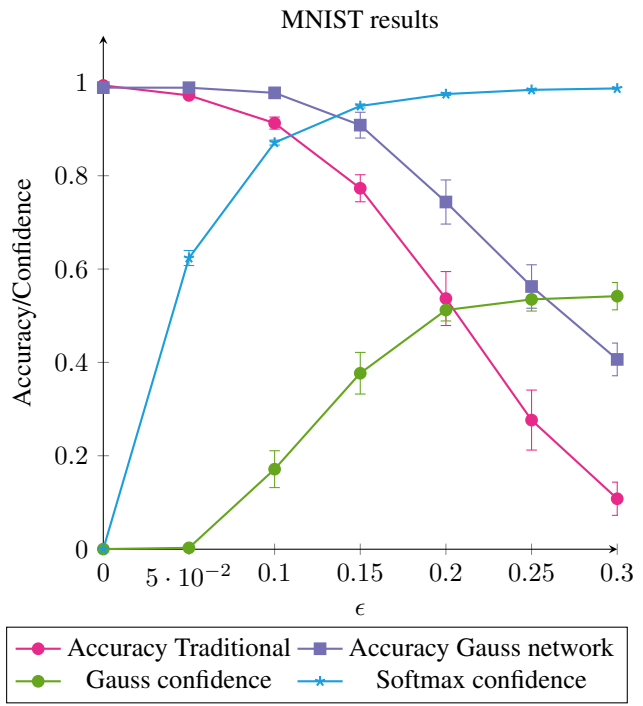}
\caption{Accuracy and confidence of the FGSM attack on the MNIST data for the traditional and Gauss-networks. The parameter $\epsilon$ denotes the step-size. Best viewed in color.}
\label{fig:noisePlot}
\end{figure}
%-----------------------------
% MNIST
%-------------------------
\subsection{MNIST results}
\begin{figure}[t]
  \centering
  \begin{tabular}{rccccc}
  \multirow{4}{*}{\rotatebox{90}{prototypes$\qquad$}}
    & 0.92 & 0.93 & 0.76 & 0.88 & 0.90 \\
    &  \includegraphics[width=0.13\columnwidth,valign=m]{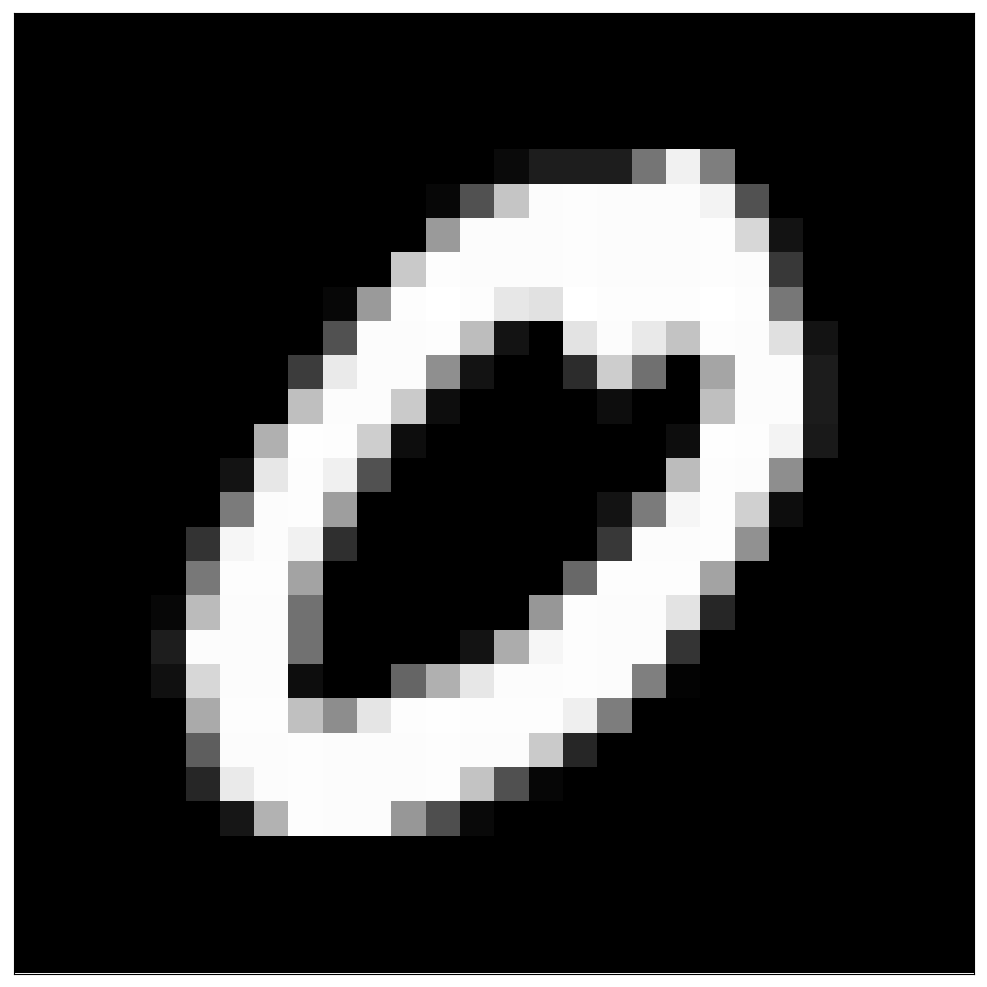}
    &  \includegraphics[width=0.13\columnwidth,valign=m]{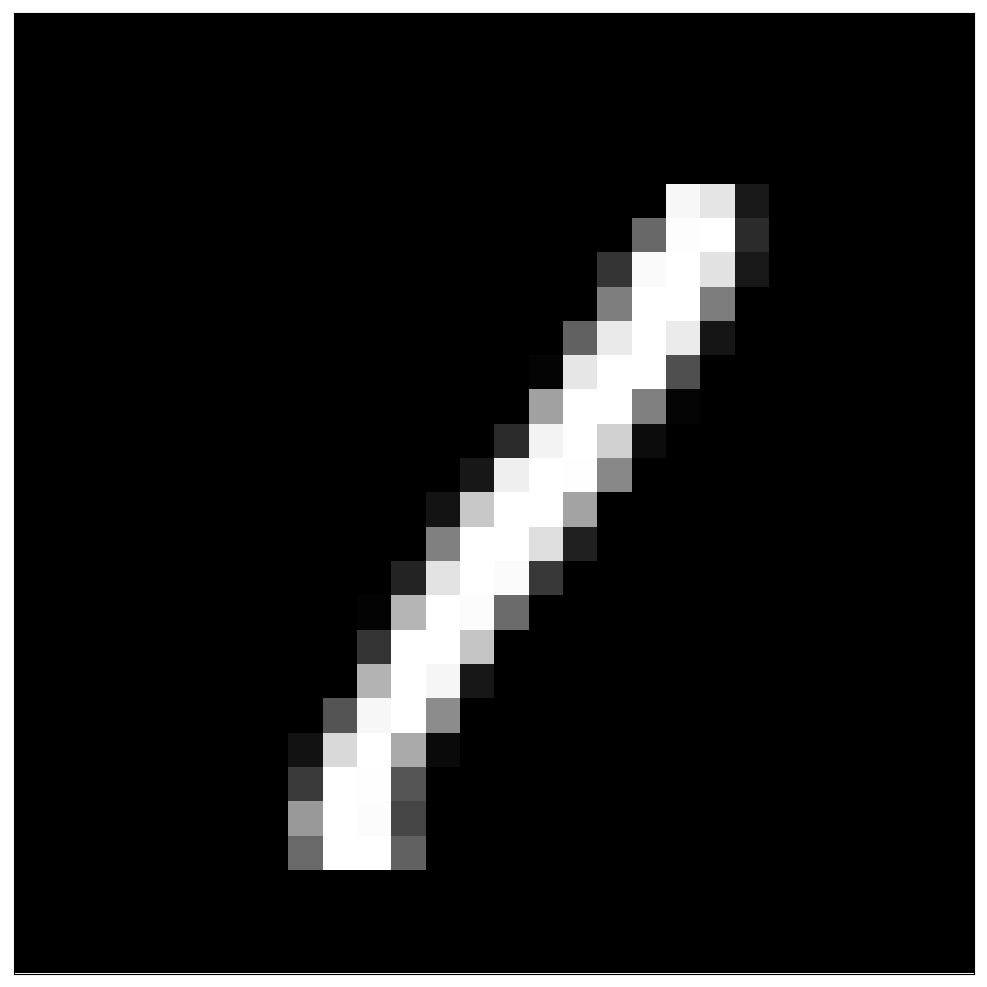}
    &  \includegraphics[width=0.13\columnwidth,valign=m]{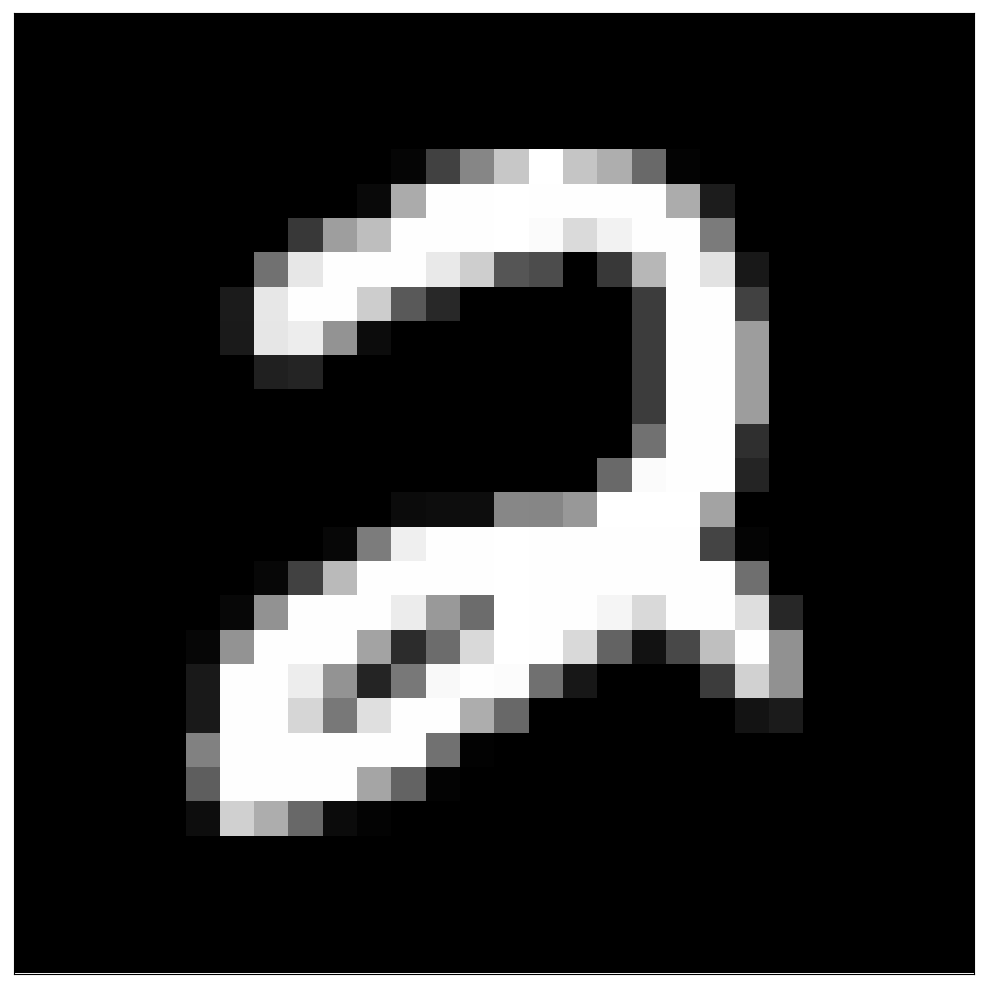}
    & \includegraphics[width=0.13\columnwidth,valign=m]{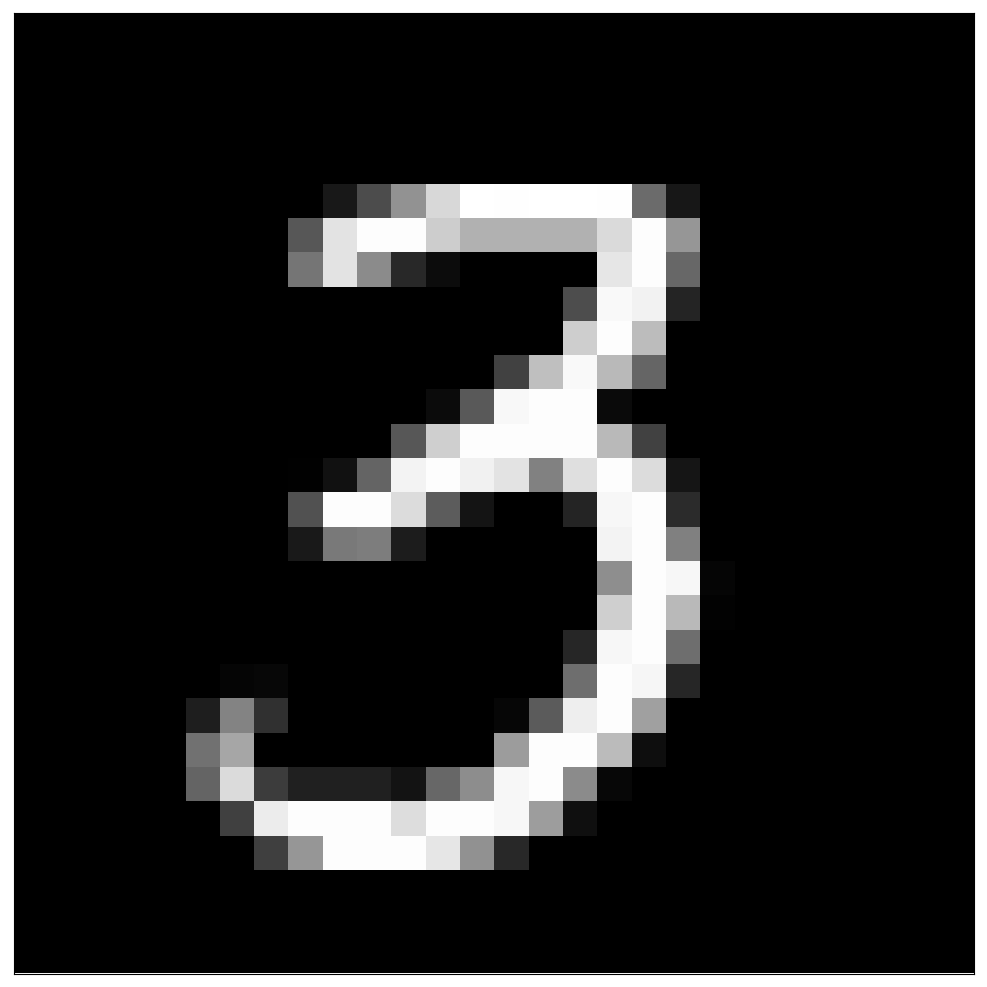}
    & \includegraphics[width=0.13\columnwidth,valign=m]{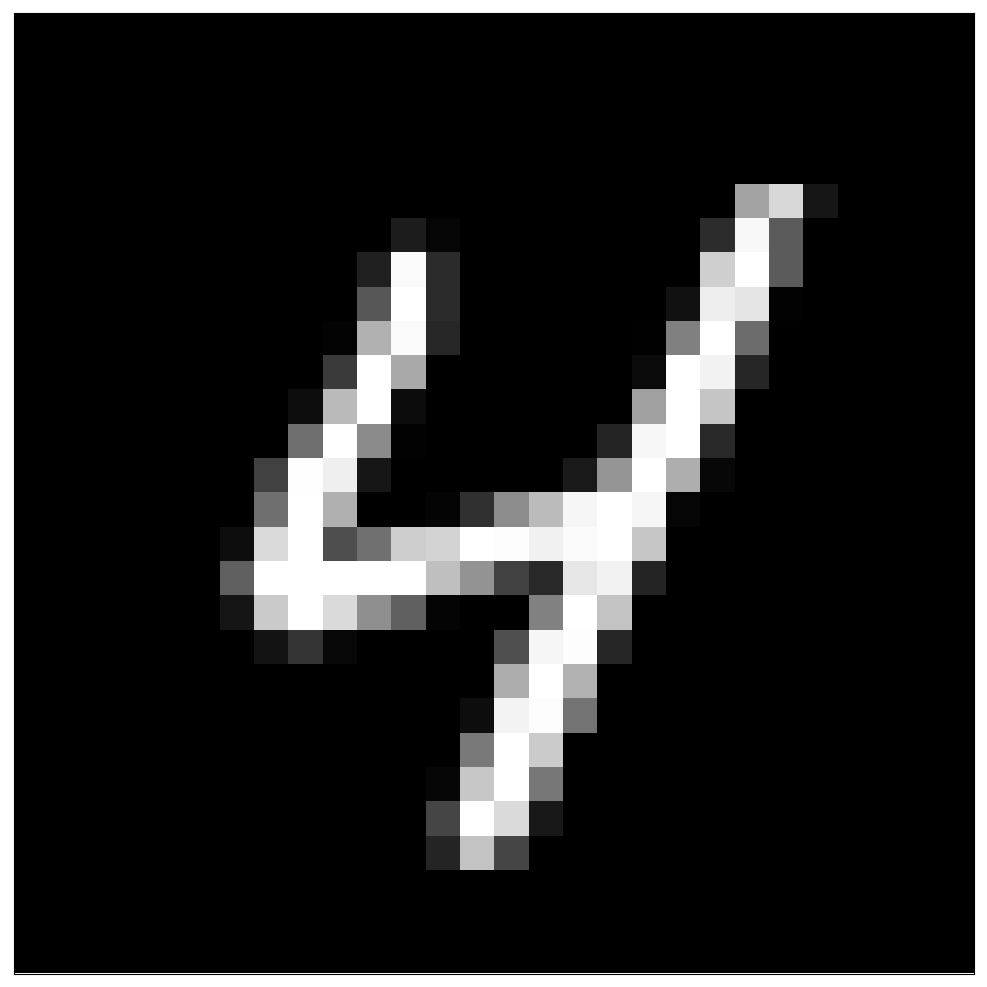} 
    \\
    & 0.62 & 0.82 & 0.84 & 0.60 & 0.73\\
    &  \includegraphics[width=0.13\columnwidth,valign=m]{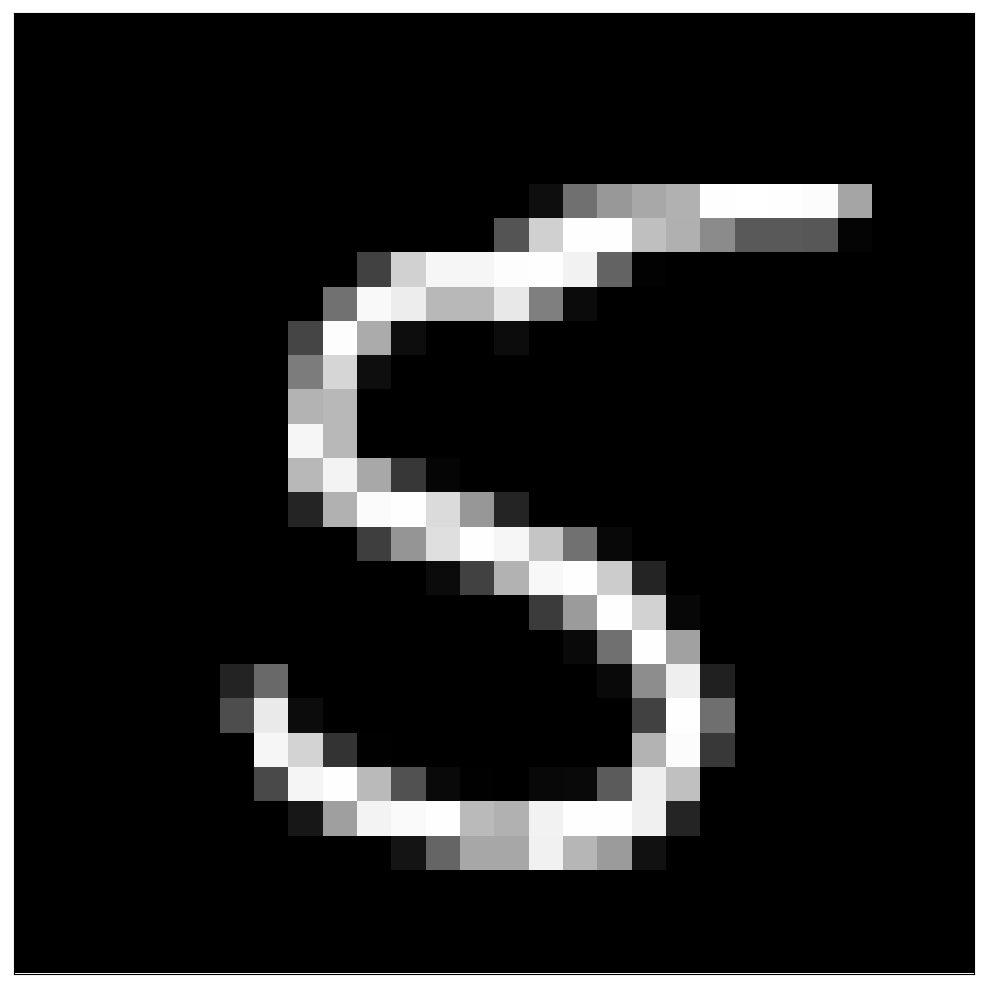} 
    &  \includegraphics[width=0.13\columnwidth,valign=m]{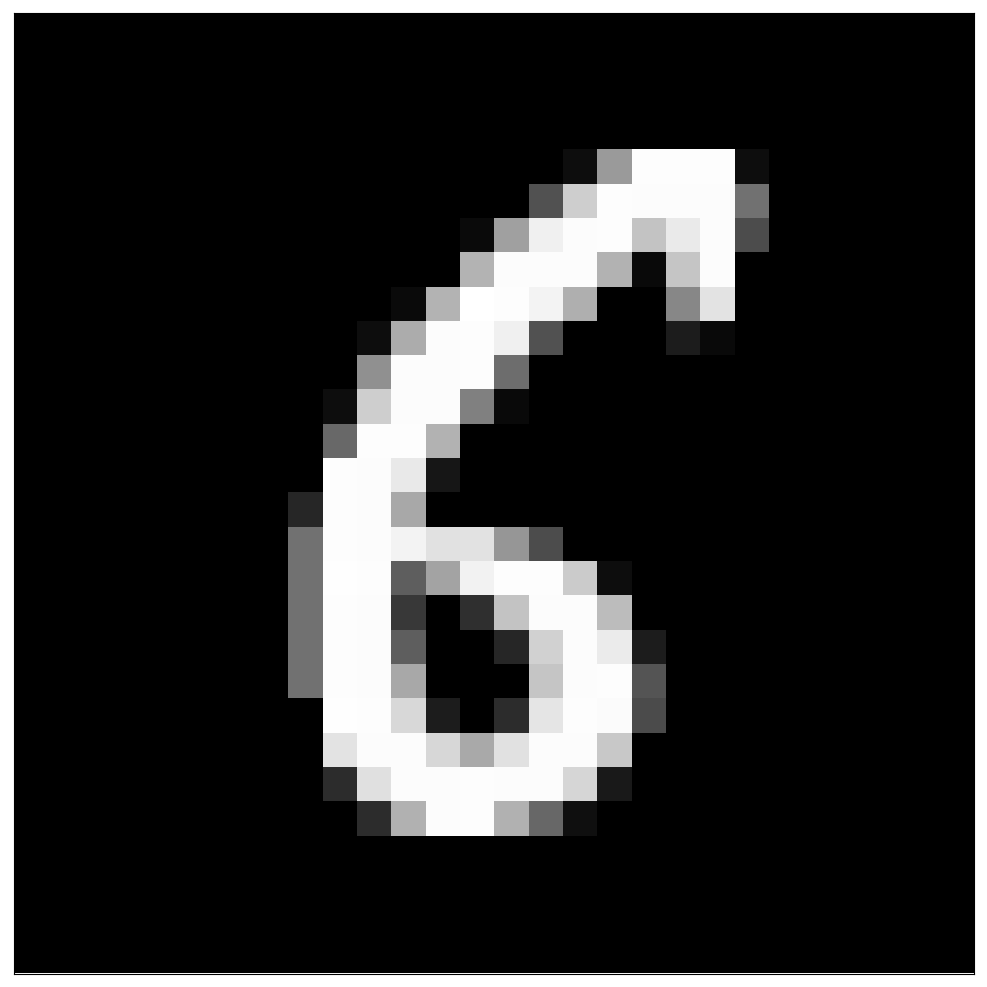}
    &  \includegraphics[width=0.13\columnwidth,valign=m]{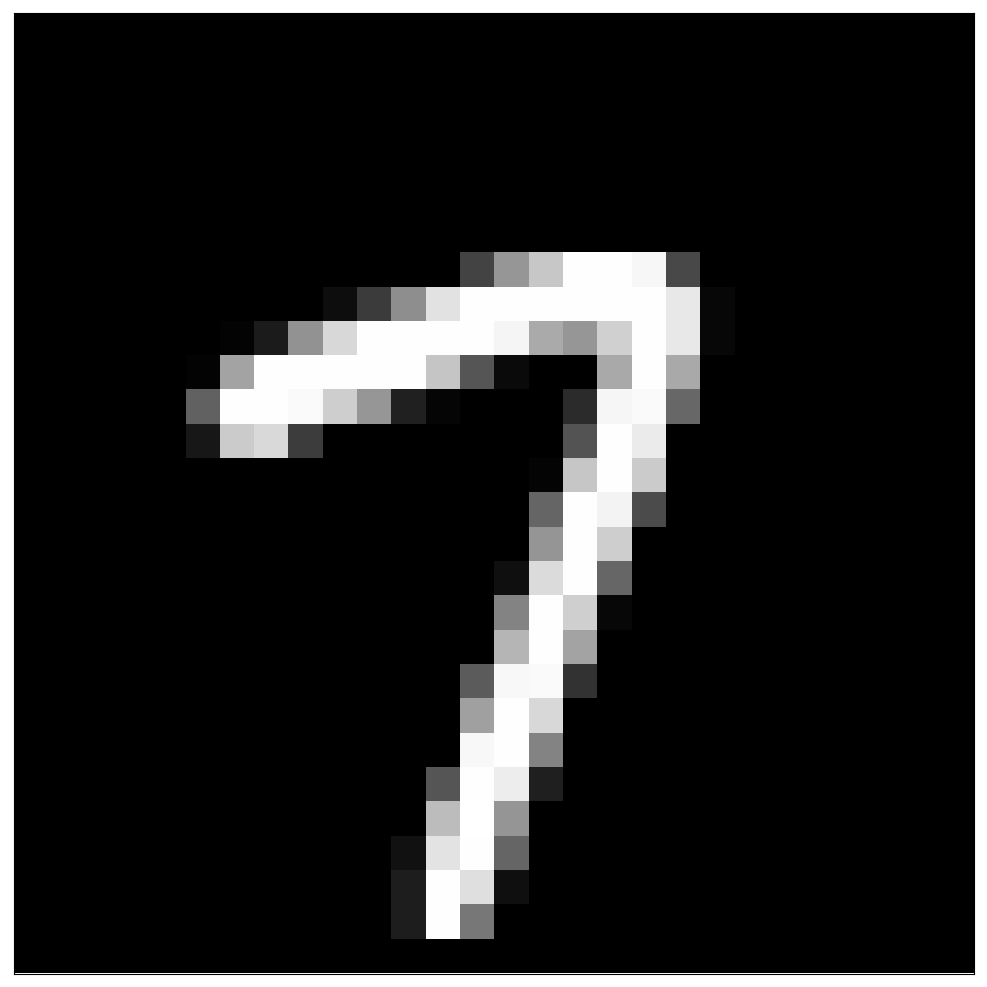}
    & \includegraphics[width=0.13\columnwidth,valign=m]{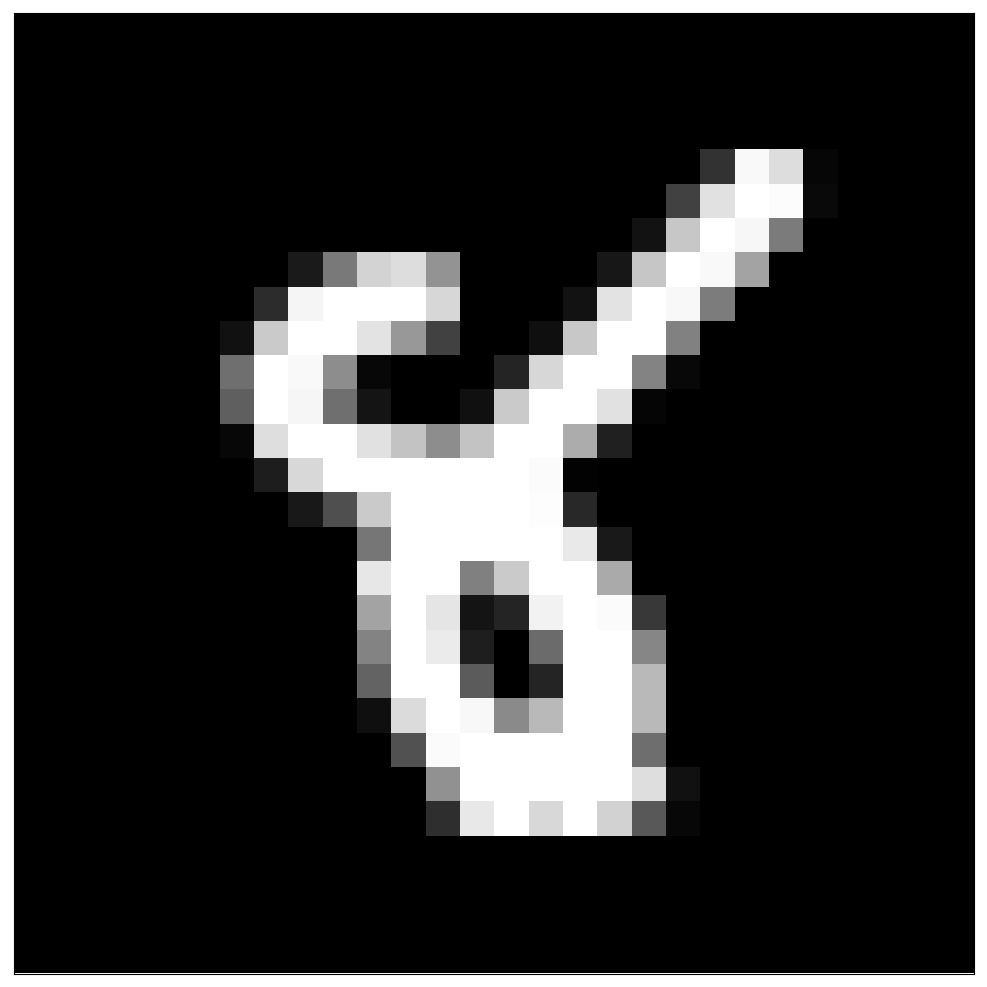}
    & \includegraphics[width=0.13\columnwidth,valign=m]{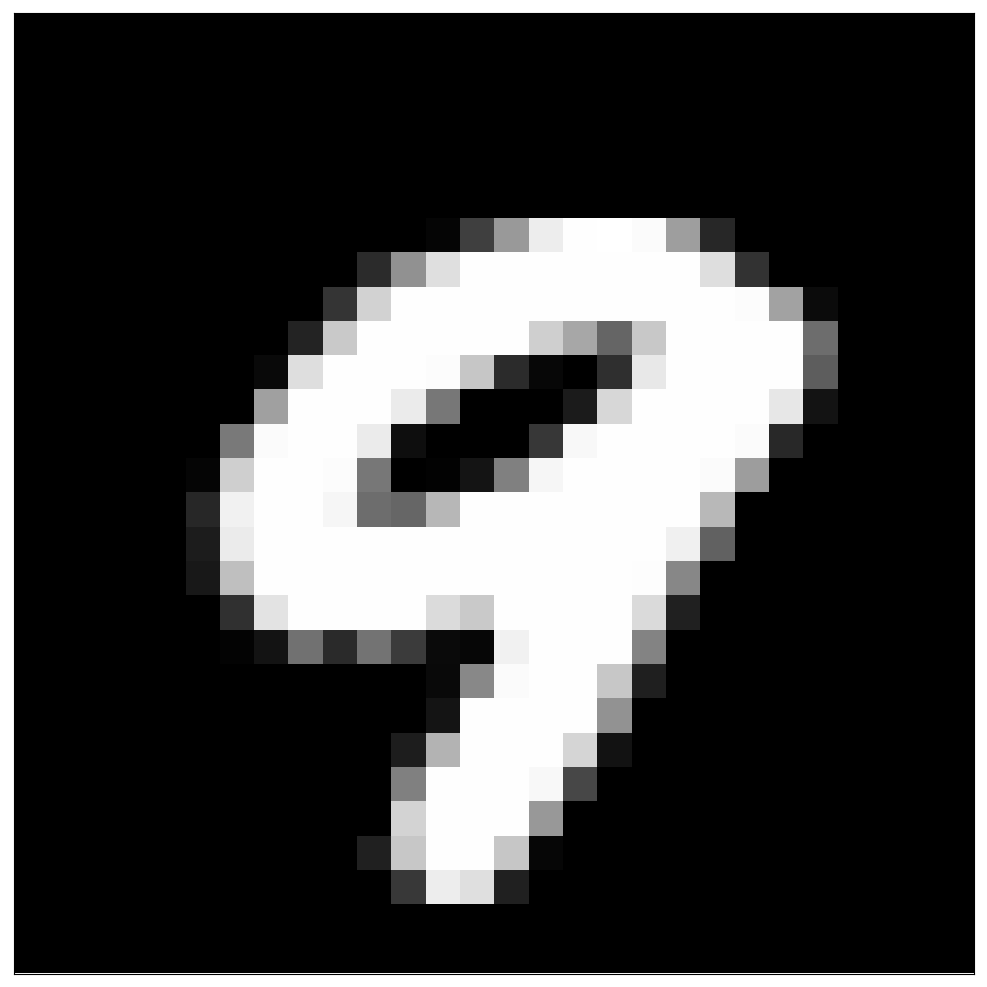} 
    \\[2em]
    \multirow{4}{*}{\rotatebox{90}{outliers$\qquad$} }   
    & $10^{-28}$ & $10^{-39}$ & $10^{-38}$ &  $10^{-27}$ & $10^{-32}$\\
    &  \includegraphics[width=0.13\columnwidth,valign=m]{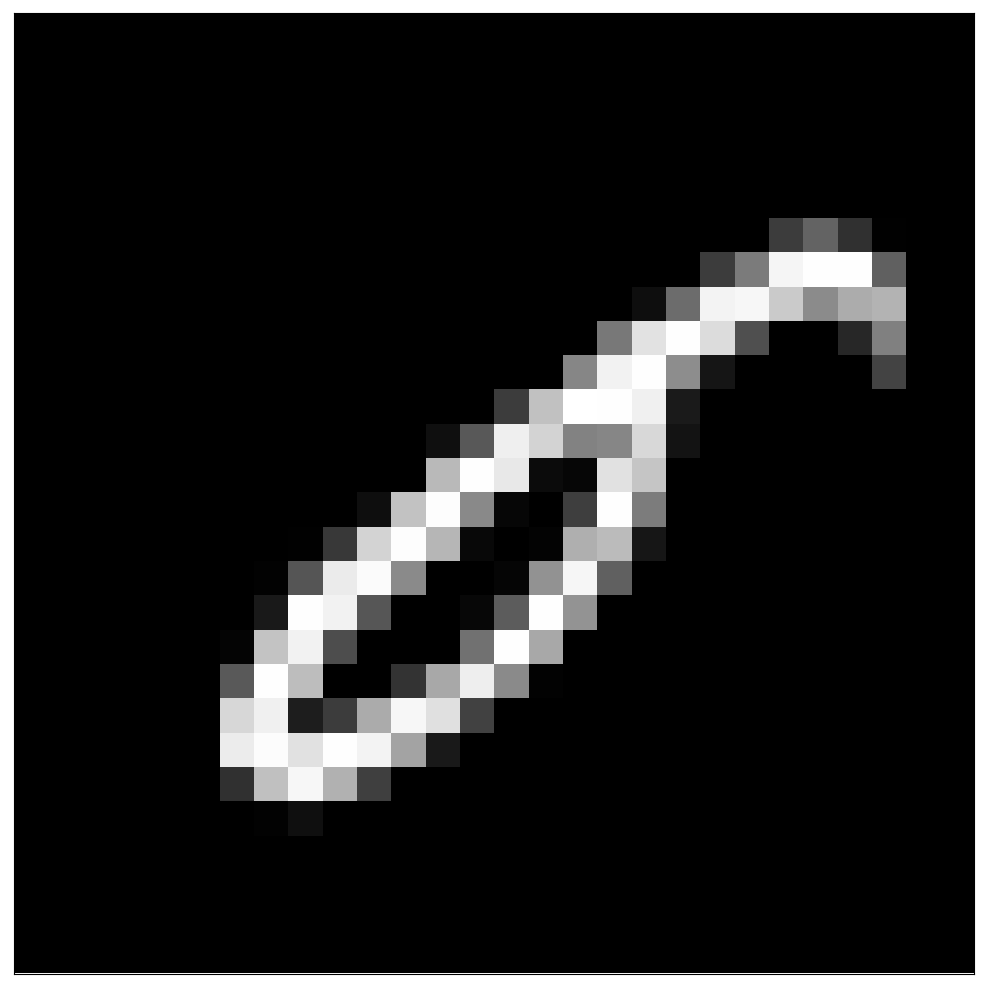} 
    &  \includegraphics[width=0.13\columnwidth,valign=m]{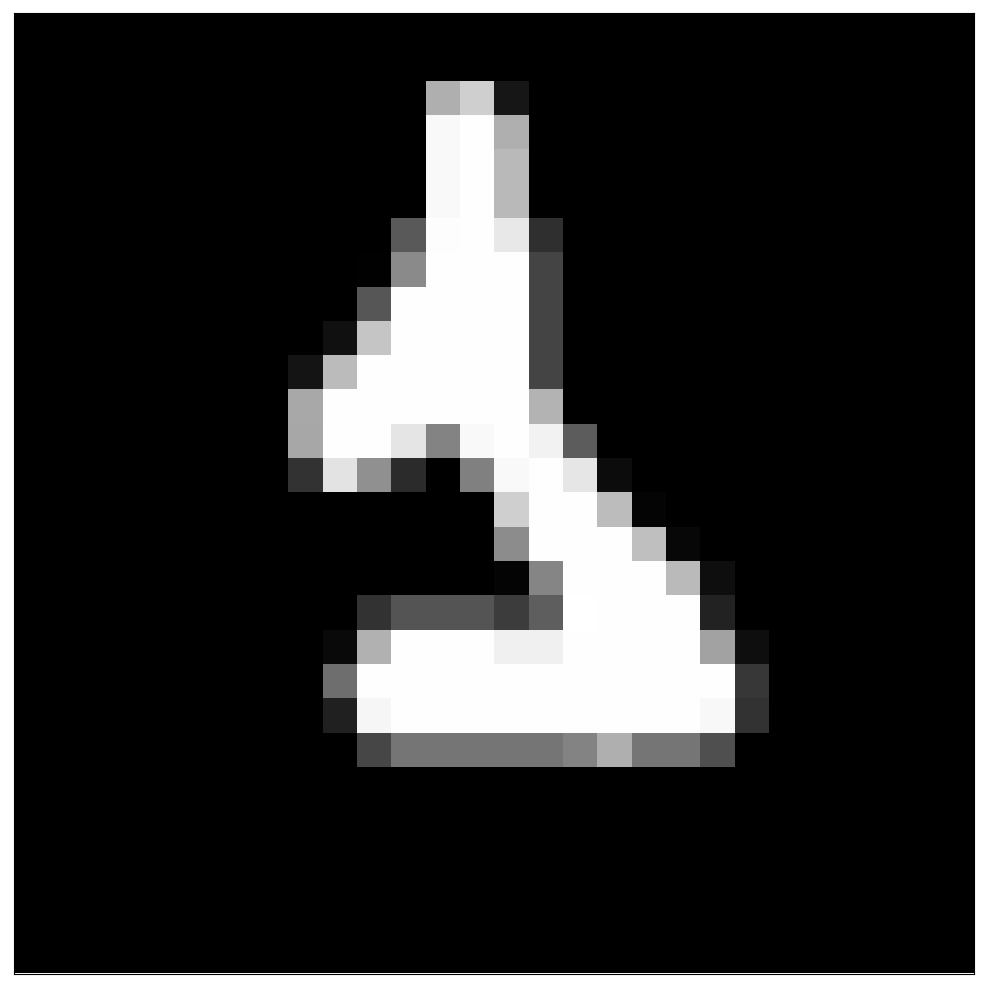}
    &  \includegraphics[width=0.13\columnwidth,valign=m]{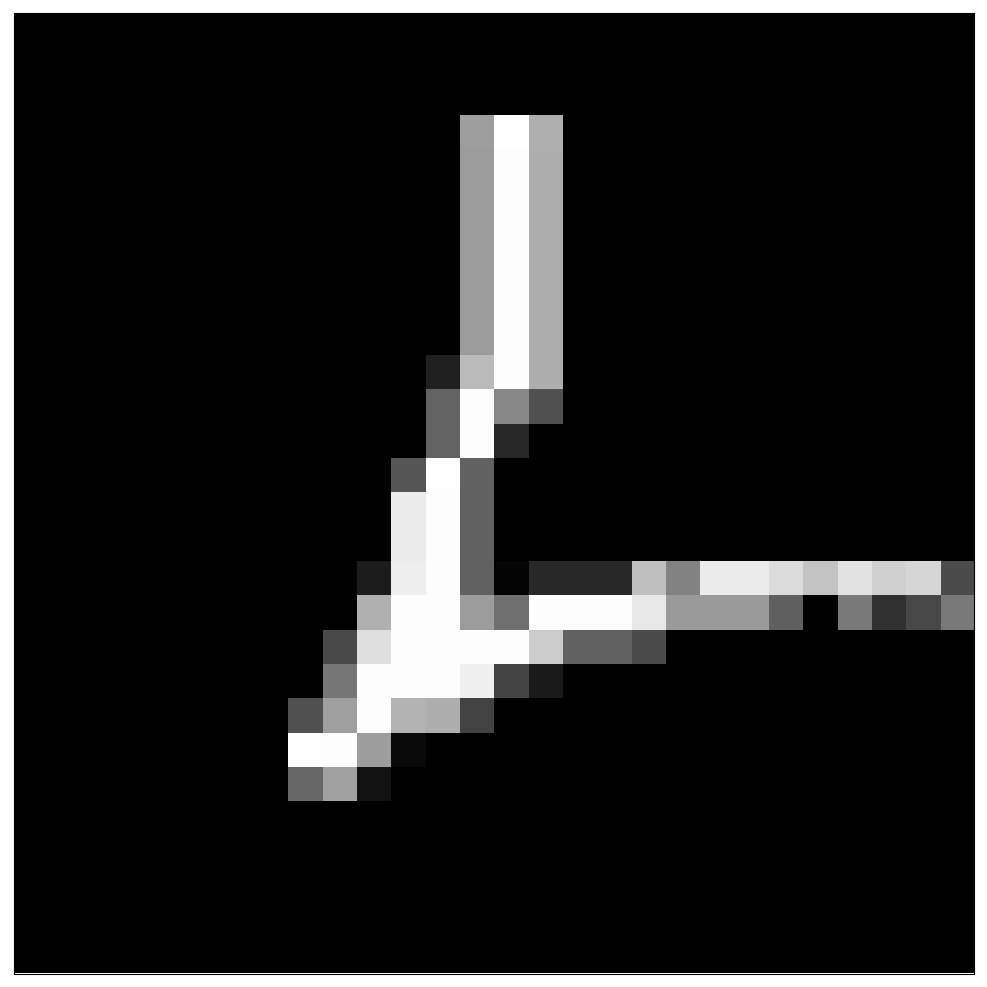}
    & \includegraphics[width=0.13\columnwidth,valign=m]{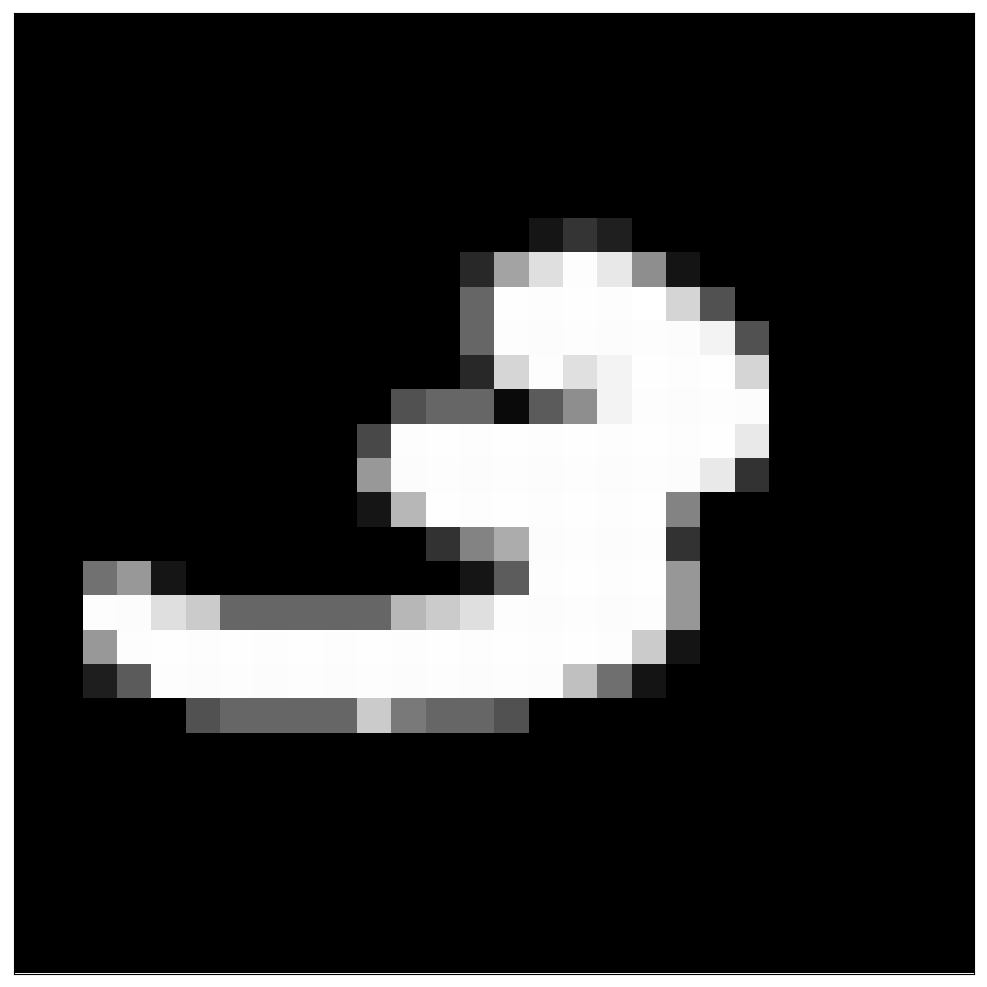}
    & \includegraphics[width=0.13\columnwidth,valign=m]{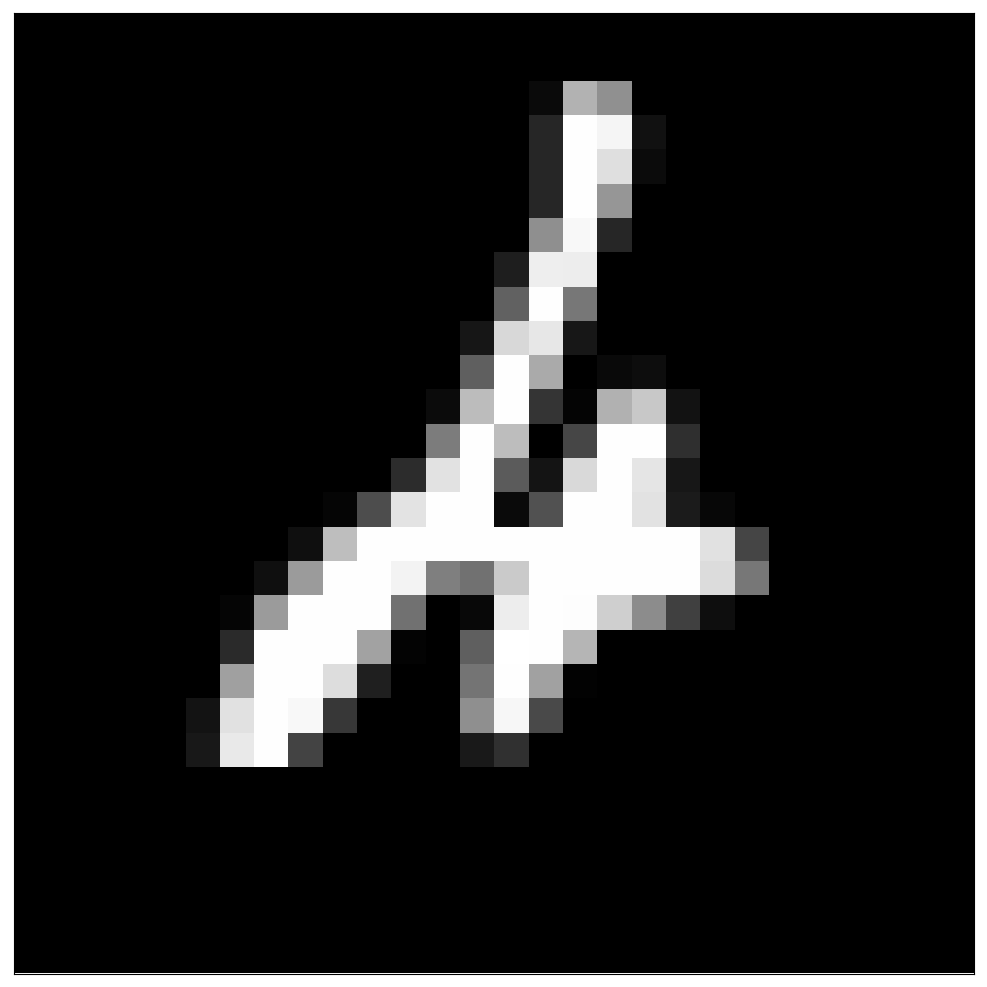}\\ 
    &  $10^{-31}$ & $10^{-40}$ & $10^{-25}$ & $10^{-31}$ & $10^{-38}$\\
    & \includegraphics[width=0.13\columnwidth,valign=m]{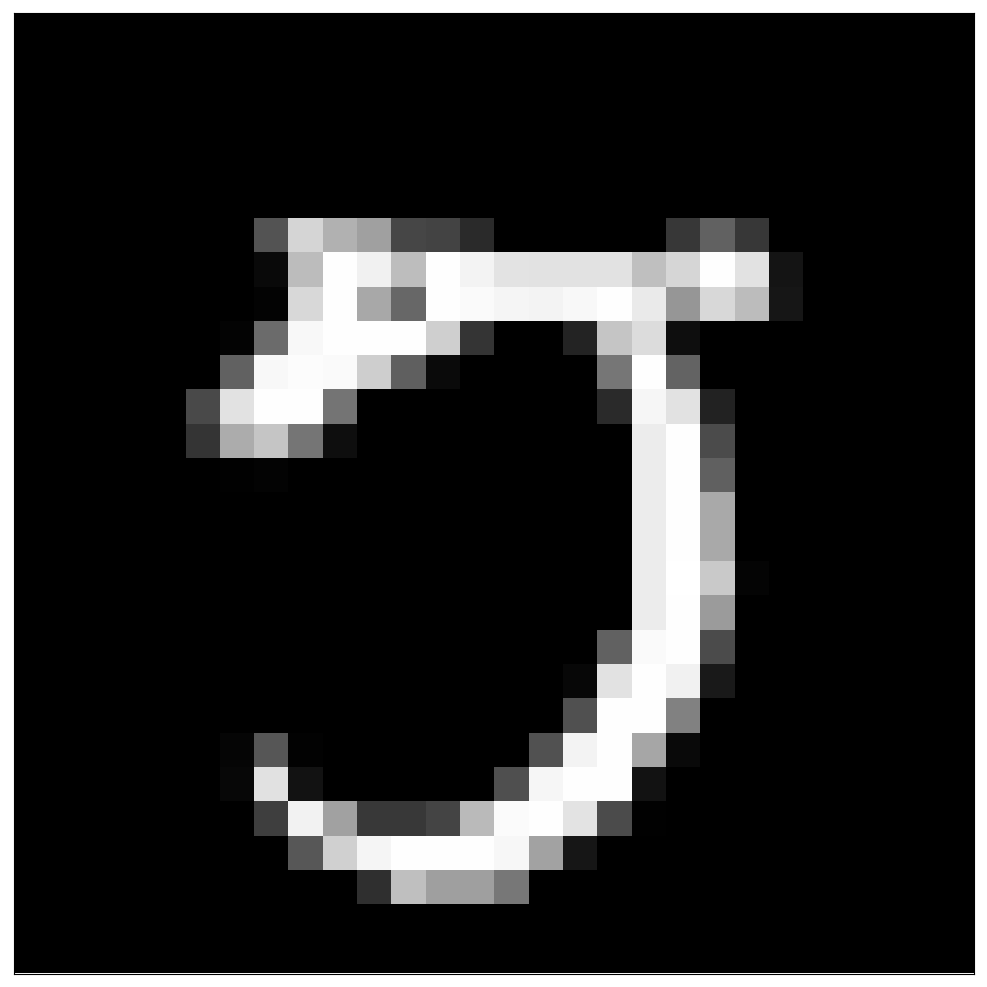}
    & \includegraphics[width=0.13\columnwidth,valign=m]{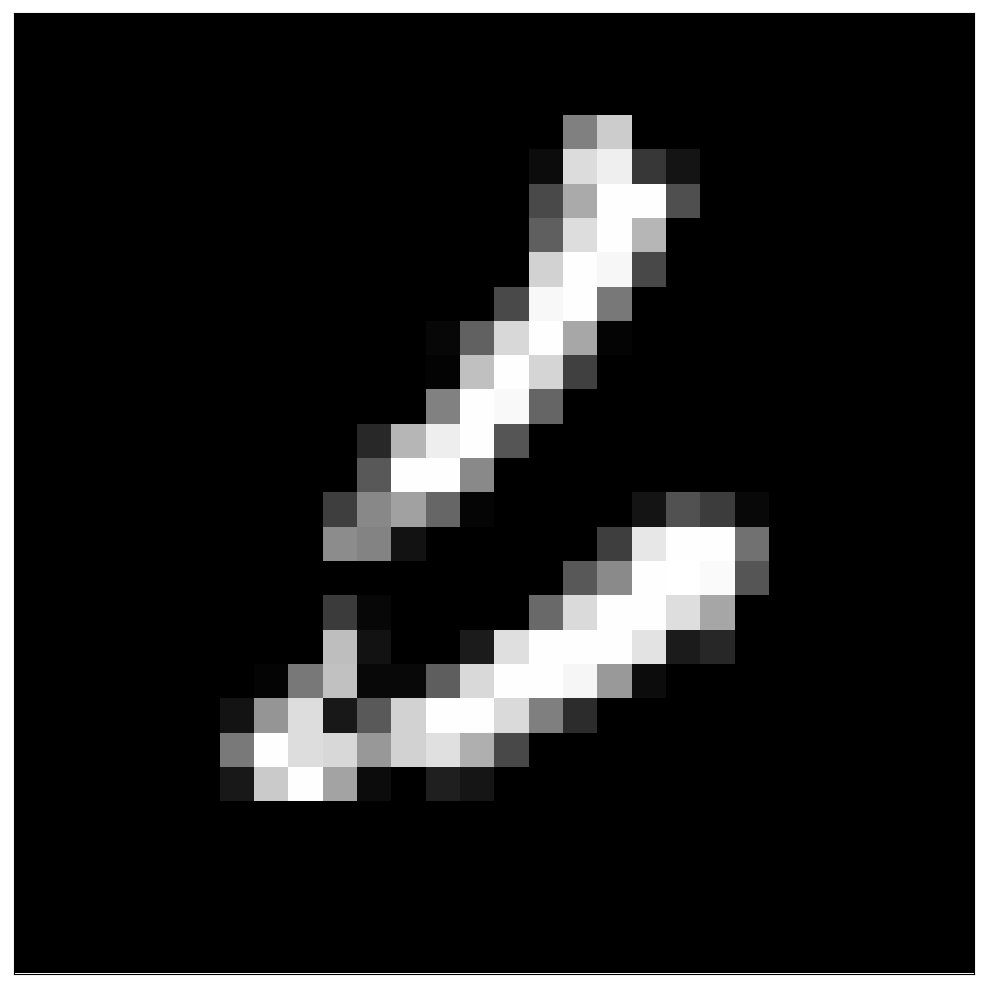}
    & \includegraphics[width=0.13\columnwidth,valign=m]{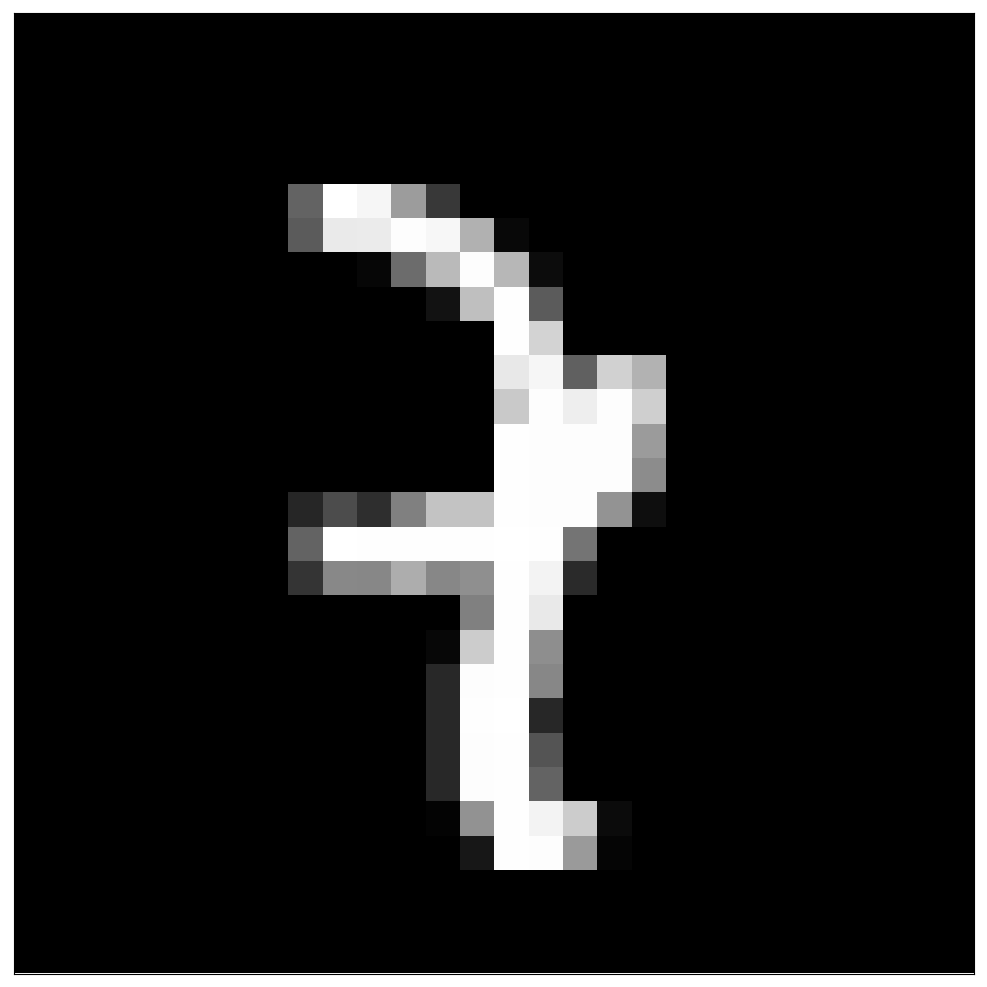}
    & \includegraphics[width=0.13\columnwidth,valign=m]{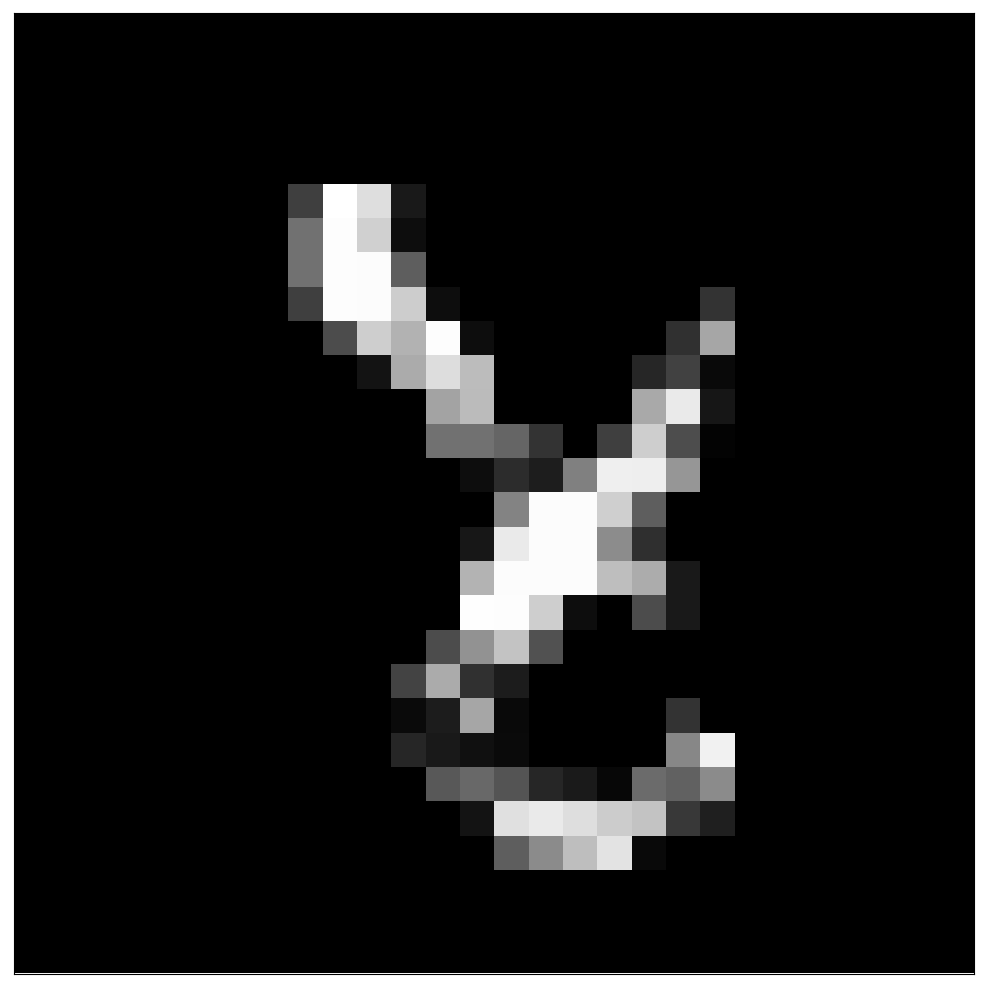}
    & \includegraphics[width=0.13\columnwidth,valign=m]{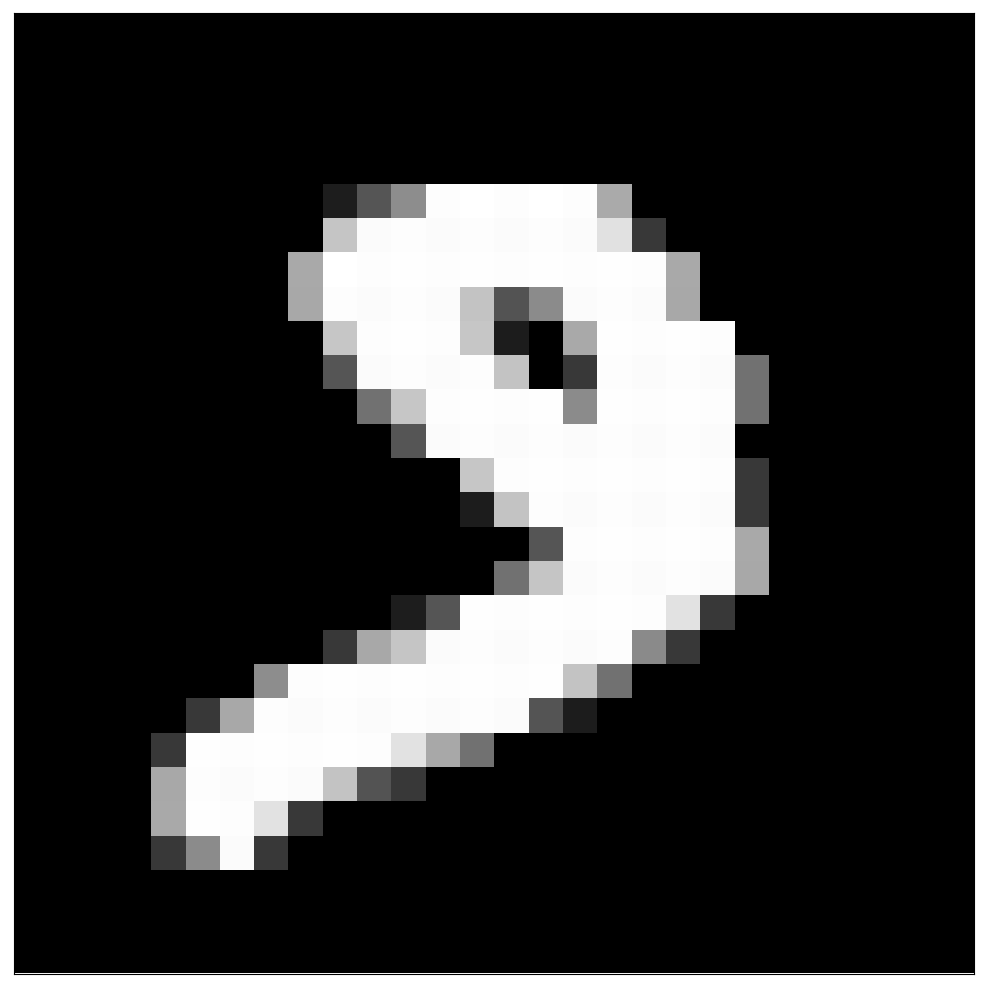}
  \end{tabular}
  \caption{Prototypes (top half) and outliers (bottom half) according to the Gauss confidence measure, with associated Gauss-confidence values.\label{fig:mnist}}
\end{figure}
The monochrome handwritten digits in the MNIST dataset are not vulnerable to one-pixel attacks in any way resembling the magnitude of the problem on Cifar-10, so we cannot simply replicate the experimental results from the previous section on this dataset.  Instead, we illustrate how the predictive accuracy of the two networks (traditional and Gauss network) varies when increasing the level of distortions as provided by the FGSM attack.  We plot the resulting accuracy and confidence of successfully attacked test data points in Figure \ref{fig:noisePlot}. The accuracy reflects the percentage of successfully attacked images in the test dataset.

When the $\epsilon$ parameter increases, all networks at some point display severely deteriorating accuracy. However, the accuracy of the Gauss network does not decrease as rapidly. The gap between the accuracies for Gauss and traditional networks increases with $\epsilon$. Similarly, the higher the distortion, the more confident do the networks become in the confidence of wrong classifications. Both confidence scores approximate the average confidences of the networks in the test dataset ($0.98$ for traditional networks and $0.56$ for Gauss-networks).

We show by means of the MNIST dataset which images come close to prototypes of their class and which images are classified as outliers. We display the images which gain highest, respectively lowest Gauss-confidence scores in their respective class on the top and bottom of Figure~\ref{fig:mnist}. The number on top of each image is the corresponding Gauss confidence. We see that the assessment of prototypes and outliers by the Gauss network largely corresponds to a human perception of well and difficult to interpret ciphers. Thus, the information given by the Gauss confidence actually corresponds with the notion of prototypes and outliers. For comparison, traditional networks generally classify a noise image with confidence higher that $0.9$ to one of the classes while a Gauss network reflects the outlier with confidences close to zero.
%======================
% Conclusions
%======================
\section{Conclusions}

We mathematically prove that the predictions of neural networks based on the softmax activation function are equivalent to $k$-means clustering.  This connection was intuited in existing work, but it has never before been formally derived.  The output of the penultimate layer of the network represents a transformed data space.  That space is partitioned into cones by the softmax function; each cone encompasses a single class.  Conceptually, this is equivalent to putting a number of centroids, to be calculated from the weights of the last layer, at equal distance from the origin in the transformed space; the data points in the dataset are subsequently classified, by clustering them through $k$-means by proximity to the centroids.  We formally derive this connection in Theorem \ref{thm:pen=km} and subsequent Equation (\ref{eq:kmPred}).

The $k$-means/softmax relation can be used to explain why softmax-based neural networks are sensitive to adversarial attacks such as one-pixel attacks.  The effect of misclassification due to small input perturbations had been theoretically analyzed in terms of the Lipschitz continuity of the network function \cite{tsuzuku2018lipschitz}; exact computation of the relevant Lipschitz modulus had been shown to be NP-hard \cite{2018Virmaux}.  We theoretically show that in addition to a small Lipschitz modulus, the robustness of a neural net also depends on the proximity with which confidently classified points are mapped to their corresponding centroid (cf.\@ Theorem~\ref{thm:centroidClose}). Hence, we establish a connection between the robustness of a network and their mapping to a $k$-means-friendly space.
%the Lipschitz modulus relates to the ease of clustering of the penultimate layer: we mathematically prove that networks with a small (desirable) Lipschitz modulus map data points closer to cluster centroids, resulting in a mapping to a $k$-means-friendly space.  Formal derivation is given in Theorem~\ref{thm:centroidClose}.

%The state-of-the-art neural networks tend to be overconfident. As a consequence, they are vulnerable to one-pixel attacks: small distortions of the input image that are almost invisible to the human eye, cause the network to overconfidently misclassify the image.  Clearly, this is not ideal.  When the network arrives at its predictions using the softmax function over the outputs of the penultimate layer, we can show that this is inherently equivalent to a $k$-means clustering in the projected space where the cluster centroids are equidistant from the origin.  The corresponding decision boundaries are cones in this projected space, the classifier is supposed to be confident in the images falling close to the cluster centroid, but in the state-of-the-art neural networks, the classifier is confident in images falling elsewhere in the corresponding cone.  This opens up the vulnerability to one-pixel attacks.  We prove this softmax-clustering relation in Theorem \ref{thm:pen=km} and subsequent Equation (\ref{eq:kmPred}).

In Section \ref{sec:tailoring}, we introduce an alternative to the softmax function in the last layer of a neural network.  This centroid-based tailored version of the network is theoretically well-founded, since we have explored the relation with $k$-means clustering.  Moreover, the tailoring does not affect the predictive accuracy of the network on the Cifar-10 test set, while substantially reducing the proportion of images whose worst one-pixel attack leads to misclassification: this rate drops by a factor between roughly $2.7$ and $6.8$ (cf.\@ Table \ref{tbl:attackCifar}).  On the MNIST dataset, we have seen how the tailored version of the network achieves highest accuracy when confronted with reasonable levels of noise in handwritten digits (cf.\@ Figure \ref{fig:noisePlot}).  Our new tailored neural network reduces vulnerability to small-noise attacks due to its capability of expressing reasonable doubt; the lack of doubt in traditional networks is illustrated in Figure \ref{fig:statConf}, where we also see that the Gauss network spreads its confidence levels throughout the available space.  This effect is desirable, since the non-boosted Gauss-confidence allows the network to distinguish class prototypes, such as the clearly written digits in the top half of Figure \ref{fig:mnist}, from outliers, such as the abstract worm paintings in the bottom half of Figure \ref{fig:mnist}.

\bibliographystyle{aaai}

\end{document}